\documentclass[11pt,letterpaper]{article}
\usepackage[margin=1in]{geometry}
\usepackage{times}
\usepackage{graphicx}
\usepackage[authoryear,round]{natbib}

\usepackage[utf8]{inputenc} %
\usepackage[T1]{fontenc}    %
\usepackage{url}            %
\usepackage{booktabs}       %
\usepackage{amsfonts}       %
\usepackage{nicefrac}       %
\usepackage{microtype}      %
\usepackage{xcolor}         %
\usepackage{amsmath,amssymb,amsthm,bm,mathtools}
\usepackage{physics}
\usepackage{color}
\usepackage{tcolorbox}
\usepackage{wrapfig}
\usepackage{placeins}
\usepackage{float}
\usepackage{titlesec}
\titleformat{\section}{\normalfont\large\scshape}{\thesection}{0.8em}{}
\titleformat{\subsection}{\normalfont\normalsize\bfseries}{\thesubsection}{0.8em}{}
\titleformat{\subsubsection}{\normalfont\normalsize\itshape}{\thesubsubsection}{0.8em}{}
\usepackage[font=small,labelfont=bf,labelsep=period,skip=5pt]{caption}
 \usepackage{ulem} 
\theoremstyle{plain}
\newtheorem{theorem}{Theorem}[section]

\newtheorem{lemma}[theorem]{Lemma}

\theoremstyle{definition}

\newcommand{\gL}{\nabla \mathcal{L}}

\usepackage{url}
\usepackage{array}
\usepackage{hyperref}
\renewcommand{\L}{\mathcal{L}}
\hypersetup{
  colorlinks   = true, %
  urlcolor     = [rgb]{0.18,0.23,0.28}, %
  linkcolor    = [rgb]{0.18,0.23,0.28}, %
  citecolor   = [rgb]{0.10,0.25,0.55} %
}

\makeatletter
\renewcommand{\maketitle}{%
  \par\begingroup
  \renewcommand{\thefootnote}{\fnsymbol{footnote}}
  \centering
  \vspace*{-38pt}
  {\normalfont\fontsize{22}{26}\selectfont\scshape \@title\par}
  \vspace{18pt}
  {\normalsize\lineskip=5pt
    \begin{tabular}[t]{c}\@author\end{tabular}\par}
  \vspace{14pt}
  \@thanks
  \endgroup
  \setcounter{footnote}{0}
}
\makeatother
\renewenvironment{abstract}{%
  \begin{list}{}{\setlength{\leftmargin}{18pt}\setlength{\rightmargin}{18pt}}%
  \item\relax\small
  {\centering\normalfont\scshape Abstract\par}\vspace{5pt}\noindent\ignorespaces
}{\end{list}\vspace{-6pt}}

\title{Measuring the local shape of\\ learning-rate transfer curves}

\author{%
  Satoki Ishikawa$^{1,2}$\thanks{Email: \texttt{\{ishikawa, rioyokota\}@rio.scrc.iir.isct.ac.jp}; \texttt{karakida.ryo@aist.go.jp}.}, Rio Yokota$^{1}$, Ryo Karakida$^{2,1}$ \\
  $^1$ Institute of Science Tokyo ~~
  $^2$ AIST
}
\date{}

\begin{document}

\maketitle

\begin{abstract}
Understanding the learning-rate (LR) dependence of trained model performance is essential for better optimization of neural networks, but the appropriate size of the LR depends on learning dynamics, architecture, and their hidden biases, and thus our understanding of this complex dependence remains limited.
The behavior of the loss curve as a function of the LR has recently been observed in extensive experiments on LR transfer across different settings, such as model sizes, and it is becoming more important to explore methodologies for quantitatively characterizing the shape of this LR curve.
In this work, we provide two relational equations that give a good empirical explanation of the gradient of the LR transfer curve, i.e., the LR gradient.
One is a theoretical formula that holds at the edge of stability, and the other
is an approximate expression obtained by naively treating this gradient as a hyper-parameter gradient.
Using these two relational equations in a complementary manner, we show a lower bound on the gradient of the curve and a zero gradient under specific conditions for normalization layers in theory.
The approximate expression also applies to a wide range of experimental settings and allows us to decompose the LR gradient into layer-wise components and quantify the strong contribution of specific layers to the curve shape.
Thus, this serves as a foundation for better understanding the internal mechanisms of LR dependence.

\end{abstract}

\section{Introduction}

The performance of deep neural networks depends not only on architectural advances and large-scale data, but also crucially on optimization and the tuning of hyperparameters (HPs)~\citep{smith2018,choi2019empirical,cohen2021gradient}.
In practice,
when we sweep multiple learning rates (LRs) and plot the final loss as a function of the LR, the curve typically has a U-shaped profile and its detailed shape varies considerably with model size, architecture, or other HPs ~\citep{yang2021tuning,everett24,yang2020feature,noci2024super,shah2025practical}.  
As the previous work has demonstrated, if the curve shapes remain robust across different model sizes, one can reuse an LR found on a small model for a much larger one, which reduces the computational cost of HP search and  is known as LR transfer. 
Although such curves are essential for understanding model performance beyond the context of transfer across model sizes \citep{wortsman2024smallscale,lewkowycz2020large,barrett2020implicit}, we refer to them as LR transfer curves, following this literature.

While experimental studies have gradually identified key factors governing the shape of the LR transfer curve, such as parameterization, weight decay, and  weight-feature alignment~\citep{everett24,kosson2025weight,haas2026on}, our theoretical understanding is still limited~\citep{hayou2026a,ghosh2026understanding}. First of all, the performance of trained models depends on various factors  
including learning dynamics, architectural choices and implicit biases~\citep{everett24,noci2024super,smith2021origin}, and this complexity makes the tractable evaluation of the curve highly non-trivial. 
In particular, modern deep learning typically uses large LRs beyond the regime where the descent lemma of classical optimization holds~\citep{barrett2020implicit,lewkowycz2020large}, and Edge of Stability (EoS) phenomena widely appear~\citep{cohen2021gradient,arora2022understanding,damian2023self}.
Such a large LR causes highly nonlinear behavior of the learning dynamics and thus that of the trained model performance.

Another reason for the difficulty of characterizing the LR transfer curve is that, because the curve is a function of the LR, which is a hyperparameter, characterizing its geometry naturally requires the {\it hypergradient}.
The computation of the hypergradient has been classically explored in the context of gradient-based HP optimization~\citep{maclaurin2015gradient,lorraine2020optimizing}. However, 
the hypergradient with respect to the LR has rarely been explored, and more crucially, its naive computation in deep architectures easily leads to gradient explosion~\citep{maclaurin2015gradient} or short-horizon bias~\citep{wu2018understanding}, making it difficult to accurately estimate the LR gradient.

In this work, to address the above-mentioned difficulties, we reveal hidden laws on how the local shape of the LR transfer curve is determined. 
More specifically, we focus on quantitatively analyzing the slope of the LR curve, that is, the first-order derivative which we refer to as the {\it LR gradient}.
The detailed contributions are summarized in the following two points. 
\begin{itemize}
\item We propose two relational expressions that quantitatively explain the LR gradient well in experiments.
First, assuming that EoS holds, we theoretically derive a formula for the LR gradient ({\it EoS-LR gradient}),  through a trick based on the Lagrange multiplier method in Section \ref{sec3}.
Second, to measure the LR gradient  in a broader range of settings, we also consider the original hypergradient update of the LR gradient. Assuming slow dynamics of the hypergradient, we identify a specific fixed-point expression of the LR gradient ({\it FP-LR gradient}) which works well empirically, as shown in Section~\ref{sec4}.
\item 
Both can explain the effect of normalization layers (Section \ref{sec5_norm}) and provide some bounds on the LR gradient (Section \ref{sec5_bounds}), but EoS-LR can explain the experimental results in much finer detail.
In particular, it enables us to characterize a flat transfer curve caused by a specific combination of  weight decay and normalization (Theorem \ref{thm2}).
Meanwhile, FP-LR does not require an explicit EoS constraint for each optimizer and allows decomposition into layer-wise contributions under a block-diagonal Hessian approximation (Section~\ref{sec5_layer}). These profiles identify large LR sensitivities in the final layer of ResNet and the embedding and output layers of GPT (Figure~\ref{fig:gpt-qknorm-lrgrad-profile}). In ResNet, changing the output multiplier changes accuracy and LR transfer across widths under $\mu$P, whereas changing the input multiplier has little effect (Figure~\ref{fig:base_width_resnet_cifar}).

\end{itemize}
Thus, this work provides a first step toward understanding the universal shape of the LR curves for LR tuning and transfer.

\section{Preliminaries}
\subsection{Related Work}

\paragraph{LR transfer curve and sensitivity}
 
\citet{yang2020feature} first analyzed how HPs should be scaled with network widths to preserve feature learning in the infinite-width limit, and introduced the Maximal Update Parametrization ($\mu$P). 
Then, \citet{yang2021tuning} empirically demonstrated that  the optimal LR can remain consistent across model sizes under $\mu$P and this prompted the expansion of LR transfer research~\citep{chizat2024feature,noci2024super,bordelon2025deep,ghosh2026understanding,hayou2026a}. 
While $\mu$P theory describes an appropriate LR scale early in training, recent empirical studies have reported that its predictions do not necessarily hold as training progresses. For instance, recent empirical evidence suggests that stable LR transfer depends on a combination of various factors, including weight-feature alignment~\citep{everett24}, weight decay~\citep{kosson2025weight}, loss function and normalization layers~\citep{haas2026on}.
In contrast, our relational expressions of the LR gradient directly analyze the LR curve and provide a quantitative explanation of these factors in a somewhat unified manner from a new, independent perspective.

Independently of LR transfer, it is known that the LR can lie beyond the classical optimization regime, where the descent lemma guarantees convergence, and that the discrete nature of the updates with large LRs strongly influences the dynamics and trained model performance~\citep{lewkowycz2020large,barrett2020implicit,smith2021origin}.
In particular, EoS is a phenomenon which appears after the early phase of training and is observed across a wide range of modern optimizers and architectures, in which the LR satisfies a specific relation with the loss curvature~\citep{cohen2021gradient, arora2022understanding,kalra2026scalable,andreyev2024edge,islamov2026non}.
While almost no study has examined LR curves under this regime, \citet{noci2024super} empirically confirmed that the largest eigenvalue of the Hessian becomes width-independent under EoS, supporting better LR transfer.
Thus, our work provides the first result that reveals, through some theoretical evaluation, how EoS governs the LR gradient.

\paragraph{Computation of hypergradients}

Hypergradients have been developed in studies of HP optimization~\citep{bengio2000gradient,maclaurin2015gradient,franceschi2017forward}.
However, a key issue is that  standard hypergradients easily suffer from gradient explosion, since their computation essentially involves products of many Jacobian matrices~\citep{maclaurin2015gradient}.
Moreover, a short-horizon bias is also known to occur, leading to underestimation of the HPs to achieve an instantaneous loss decrease~\citep{wu2018understanding}.
Our setting is different because we use hypergradients to evaluate the LR sensitivity of trained models rather than for HP optimization, but iterative hypergradient computation is still unstable, as is observed in Section \ref{sec4}.
In contrast, we find that, although its derivation is heuristic, a certain fixed-point expression provides an accurate approximation to the LR gradient in our experiments.
Note that in HP optimization, fixed-point formulations have been explored mainly for other HPs such as weight decay and data transformations, rather than LR~\citep{lorraine2020optimizing}.

\subsection{The shape of LR transfer curves}

\begin{figure}
    \centering
    \includegraphics[width=\linewidth]{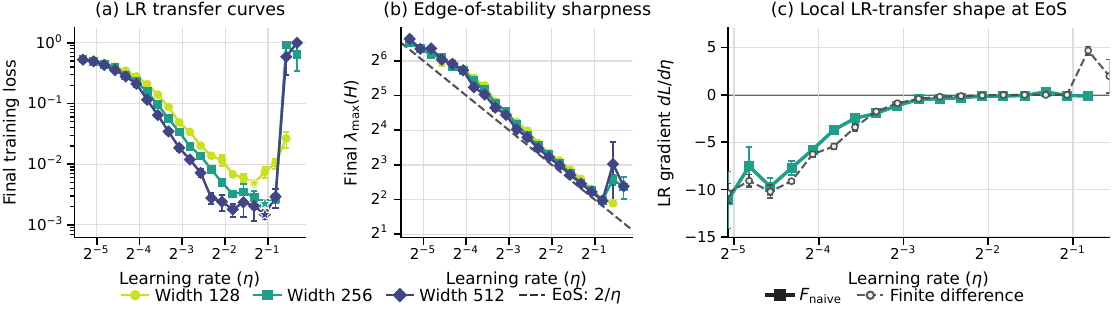}
    \caption{
\textbf{Learning-rate transfer curves and their local shape at the edge of stability.}
(a) LR transfer curves show how the final training loss depends on the learning
rate $\eta$ across network widths.
(b) We focus on the EoS regime,
where the largest Hessian eigenvalue satisfies
$\lambda_{\max}(H) \approx 2/\eta$.
(c) The EoS-LR gradient of $F_{\mathrm{naive}}$ closely tracks the
finite difference estimate, thereby capturing the local shape of the
LR transfer curve (shown for width 256).
}
    \label{fig:fig1}
\end{figure}

Previous work has empirically examined the shape of the LR transfer curve of the loss function $\mathcal{L}(\theta)$ through grid search. This means that the local shape is given by a {\it finite difference}:
\begin{equation}
\Delta \mathcal{L} = \mathcal{L}(\theta_*(\eta + \Delta)) - \mathcal{L}(\theta_*(\eta)),
\label{fin_diff}
\end{equation}
where $\eta$ is the LR, $\theta_*$ denotes the trained model parameters, and $\Delta$ is the resolution of the grid search over the LR.
Figure~\ref{fig:fig1}(a) shows a typical LR transfer curve.  The shape of this curve can depend on multiple factors, such as parameterization, optimizers, learning dynamics, and architectures~\citep{yang2021tuning,kosson2025weight,haas2026on,everett24,noci2024super}.
However, identifying universal behavior empirically is highly challenging in such complex settings, since relying solely on this finite-difference expression requires exhaustive experiments and plots across diverse settings. A more transparent expression for the curve is therefore desirable.

A natural first attempt is to assume that $\Delta$ is sufficiently small and consider the first derivative of the loss; however, this requires computing the hypergradient $\partial \theta_*/\partial \eta$, whose evaluation in modern neural networks is by no means trivial:
\begin{equation}
\frac{d \mathcal{L}}{d \eta} = \nabla_\theta \mathcal{L}^\top \frac{\partial \theta_*}{\partial \eta}.
\label{eq_ori_grad}
\end{equation}
Technically speaking, this $\mathcal{L}$ can be a validation loss different from the training loss. We consider the training loss for simplicity, as is often supposed in the previous work~\citep{yang2020feature}. The methods presented in this work can also apply to the validation loss if we have validation samples.

Section \ref{sec3} provides a framework for implicitly evaluating Eq.~(\ref{eq_ori_grad}) without directly computing $\partial \theta_*/\partial \eta$. Section \ref{sec4} presents an empirically validated approximation for explicitly evaluating the hypergradient.

\section{EoS-constrained LR gradient}
\label{sec3}

In this section, we focus on learning under the EoS phenomenon, which links the LR and trained model parameters during training. 
The key idea is that 
using this relation as a constraint in the method of Lagrange multipliers, we can obtain a formula which the LR gradient should satisfy.

\subsection{Basic formula}

Suppose that the model is trained under the EoS, and that its global minimum is the same as the solution of the following problem: 
\begin{equation}
\min_\theta \mathcal{L}(\theta) \quad s.t. \quad \eta = F(\theta).
\label{eq_ori_opt}
\end{equation}
$F$ denotes the inverse of the sharpness. We suppose that $\L$ is twice differentiable and $F$ is differentiable.
For gradient descent (GD), EoS holds for the following $F$~\citep{cohen2021gradient}: 
\begin{equation}
F_{naive}(\theta) = 2/\lambda_{max}(H(\theta)), 
\end{equation}
where $H$ denotes the Hessian of the loss function. 
The dependence on the algorithm can be incorporated into $F$. As we show later, several previous studies have examined the algorithm dependence of $\eta=F(\theta)$, and the corresponding expressions can be substituted into our formulation.

The following is a simple idea, but it leads to a novel and interesting relation for the LR gradient.
\begin{lemma}
Consider solving the optimization problem (\ref{eq_ori_opt}) by the Lagrange multiplier formulation $\min_\theta \mathcal{L}(\theta) + \lambda F(\theta)$ for a multiplier $\lambda \in \mathbb{R}$. Then, at the minimum,  
\begin{equation}
d \mathcal{L}/d \eta = - \lambda.
\label{eq_lag}
\end{equation}
\label{lem3_1}
\end{lemma}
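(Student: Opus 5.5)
This is the envelope theorem for constrained optimization. I will work with the value function $V(\eta) = \mathcal{L}(\theta_*(\eta))$, where $\theta_*(\eta)$ solves (\ref{eq_ori_opt}). I assume that $\theta_*(\eta)$ and the associated multiplier $\lambda(\eta)$ depend differentiably on $\eta$; this holds, for instance, by the implicit function theorem when $\nabla F(\theta_*) \neq 0$ and the bordered Hessian of the Lagrangian is nonsingular.

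The sign convention is as follows. The Lagrangian is $\mathcal{L}(\theta) + \lambda(F(\theta) - \eta)$, and the constant $-\lambda\eta$ does not affect the minimization over $\theta$. So the formulation $\min_\theta \mathcal{L}(\theta) + \lambda F(\theta)$ in the statement has the same stationarity condition,
\begin{equation*}
\nabla \mathcal{L}(\theta_*) + \lambda \nabla F(\theta_*) = 0 ,
\end{equation*}
together with feasibility $F(\theta_*(\eta)) = \eta$.

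The argument then has two steps.

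\textbf{Step 1 (differentiate the constraint).} Differentiating $F(\theta_*(\eta)) = \eta$ with respect to $\eta$ gives
\begin{equation*}
\nabla F(\theta_*)^\top \frac{\partial \theta_*}{\partial \eta} = 1 .
\end{equation*}

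\textbf{Step 2 (substitute into the LR gradient).} By Eq.~(\ref{eq_ori_grad}),
\begin{equation*}
\frac{d\mathcal{L}}{d\eta} = \nabla \mathcal{L}(\theta_*)^\top \frac{\partial \theta_*}{\partial \eta} = -\lambda \, \nabla F(\theta_*)^\top \frac{\partial \theta_*}{\partial \eta} = -\lambda .
\end{equation*}
The second equality uses the stationarity condition and the third uses Step 1. This yields Eq.~(\ref{eq_lag}).

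Equivalently, one can differentiate the full Lagrangian $\mathcal{L}(\theta_*) + \lambda(F(\theta_*) - \eta)$ in $\eta$. The terms multiplying $\partial\theta_*/\partial\eta$ vanish by stationarity, and the terms multiplying $\partial\lambda/\partial\eta$ vanish by feasibility, leaving only the explicit derivative $-\lambda$. I would present this as a remark confirming the result.

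The main obstacle is not the calculation but justifying differentiability of $\theta_*(\eta)$ and the existence of the multiplier. Existence requires a constraint qualification, namely $\nabla F(\theta_*) \neq 0$. For $F_{naive}$ there is an additional problem: $\lambda_{max}(H)$ is differentiable only where the top eigenvalue is simple, in which case $\nabla \lambda_{max} = \nabla (v^\top H v)$ for the top eigenvector $v$. Since the paper already assumes $F$ is differentiable and $\mathcal{L}$ is twice differentiable, I would state the regularity conditions (a simple top eigenvalue, a nondegenerate bordered Hessian, and a regular constraint) explicitly as hypotheses and invoke the implicit function theorem on the KKT system in $(\theta, \lambda)$. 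That gives a smooth local branch $(\theta_*(\eta), \lambda(\eta))$, on which the two-step computation above is valid.
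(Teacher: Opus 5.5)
Your proof is correct and follows the same route as the paper: differentiate the constraint $F(\theta_*(\eta))=\eta$ to get $\nabla F^\top \partial\theta_*/\partial\eta = 1$, then substitute the stationarity condition $\nabla\mathcal{L}+\lambda\nabla F=0$ into $d\mathcal{L}/d\eta=\nabla\mathcal{L}^\top\partial\theta_*/\partial\eta$. Your regularity hypotheses (a differentiable local branch via the implicit function theorem, a constraint qualification, and a simple top eigenvalue for $F_{naive}$) spell out conditions that the paper's short argument uses without stating.
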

{\it Proof.}  We abbreviate $\nabla_\theta$ as $\nabla$. Since $\eta = F(\theta)$, we also have $1=dF/d\eta = \nabla F^\top \partial \theta/\partial \eta$. Combining this with $\nabla \mathcal{L}(\theta) + \lambda \nabla F(\theta)=0$, 
we obtain the result. \qed

We first explain an explicit expression and its computation using the most basic case, $F=F_{naive}$, as a concrete example.
 Taking the inner product of $\nabla\mathcal{L}+\lambda\nabla F=0$ with $\gL$ and using Eq. (\ref{eq_lag}), we obtain the following {\it EoS-LR gradient}: 
\begin{equation}
\qquad\frac{d \mathcal{L}}{d \eta}
 = \frac{\|\gL\|^2}{ \gL^\top \nabla F} %
= -\frac{2\|\gL\|^2}{\eta^2 \gL^\top \nabla \lambda_{max}}.
\label{eq_dL_naive}
\end{equation}

This expression is easy to treat in the sense that the numerator is the squared gradient norm easily obtained for the trained model. However, its denominator involves $\nabla \lambda_{max}$, a quantity based on third derivatives of the loss. This quantity has rarely been examined numerically in previous work, and its stable numerical computation is generally unexplored. We evaluate the directional derivative $\gL^\top\nabla F$ by finite differences, avoiding explicit third derivatives (Appendix~\ref{app:eos-lr-implementation}).
The Lagrange multiplier method only examines how
the constrained minimization of the loss itself constrains
the form of the solution, which does not explicitly depend on learning dynamics.  Nevertheless, our expression explains the LR gradient of the trained models well, as is shown in
Figure \ref{fig:fig1}.
When EoS holds, as shown in Figure~\ref{fig:fig1}(b), substituting the trained model parameters into Eq.~(\ref{eq_dL_naive}) accurately predicts the LR gradient computed by finite differences, as shown in Figure~\ref{fig:fig1}(c). 
We empirically observed that when EoS holds across widths, the  curve transfers across widths (i.e., the position of the U-shaped curve does not shift with width), consistent with the findings of \cite{noci2024super}.

Lemma \ref{lem3_1} and Eq.~(\ref{eq_dL_naive}) imply that, at ideal minima, $\nabla \L^\top\nabla\lambda_{max}$ is positive when the LR curve decreases with $\eta$.
This inner product is sometimes assumed to be negative during the loss decrease along training trajectories in EoS theory~\citep{damian2023self}, but its sign need not be the same at averaged parameters.
The angle may differ between an optimization trajectory exhibiting period-two oscillations and the flat manifold on which our theory assumes the constrained minimum in Eq.~(\ref{eq_ori_opt}).
Consistent with this distinction, we observe a positive inner product at the averaged parameters of trained models (Figure~\ref{fig:alignment-resnet} in Appendix \ref{sec_C1}).
Appendix~\ref{app:eos-lr-implementation} describes how we compute the EoS-LR gradient in Figure~\ref{fig:fig1}.

\begin{figure}
    \centering
    \includegraphics[width=\linewidth]{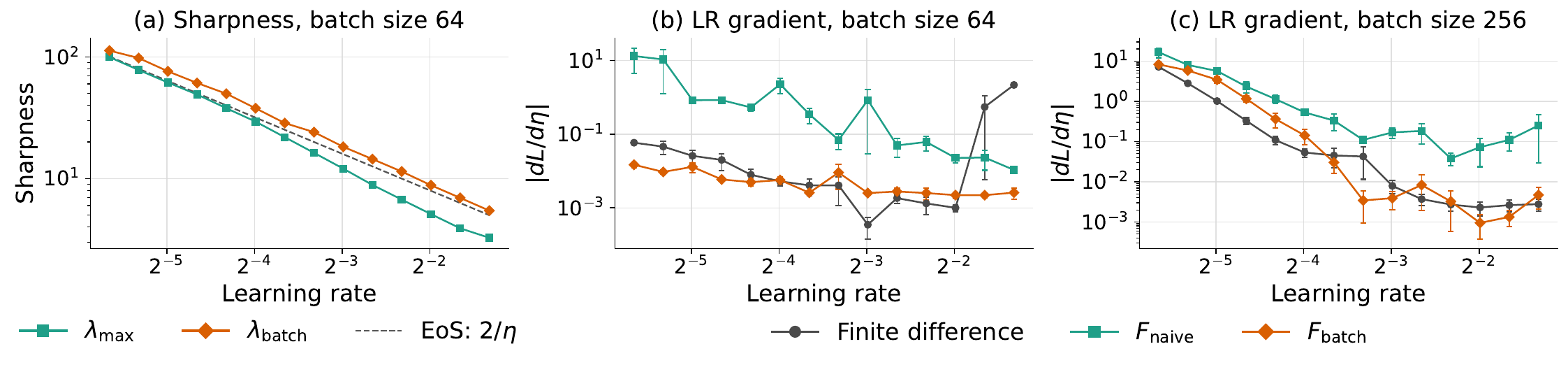}
    \caption{\textbf{EoS-LR gradients based on $F_{batch}$ capture LR sensitivity at small batch sizes.}
    (a) As the batch size decreases, $\lambda_{max}$ deviates from $2/\eta$.
    (b,c) At small batch sizes, the LR gradient based on $F_{batch}$ closely matches the finite-difference estimate.}
    \label{fig:bs-lrgrad-eos}
\end{figure}

\noindent
{\bf Batch Sharpness. }
Similarly, by substituting different $F$ into the intermediate expression in Eq.~(\ref{eq_dL_naive}), we can compute $d\L/d\eta$ for individual EoS cases.
\cite{andreyev2024edge} proposed 
\begin{equation}
F_{batch}(\theta) =2 /E_B[ \gL(B)^\top H(B) \gL(B)/\|\gL(B)\|^2],
\end{equation}
where the expectation $E_B$ is taken over the mini-batches.
For SGD with momentum, $F_{batch}(\theta)$ has been argued to approach $2(1+\beta)/\lambda_{max}$ in the large-batch limit and $2(1-\beta)/\lambda_{max}$ in the small-batch limit \citep{andreyev2026momentum}. 
We examine this $F_{batch}$ as an example showing that $F$ need not be restricted to $F_{naive}$ and can quantitatively explain the LR curves. 
The results are shown in Figure~\ref{fig:bs-lrgrad-eos}.
For small batch sizes, $F_{naive}$ fails to match the finite-difference estimates, whereas $F_{batch}$ successfully matches them.

\section{Fixed-point LR gradient}
\label{sec4}
EoS-LR characterizes LR sensitivity through an EoS constraint, whose form depends on the optimizer. We now introduce a heuristic approximation of the LR gradient, derived directly from the training updates, without requiring an explicit EoS constraint. 
While naive recursive computation of the hypergradient and its fixed-point approximation can still be unstable, we find that a time-averaged fixed-point approximation works well in a stable manner, which is referred to as the FP-LR gradient.
This formulation is more flexible than the EoS-LR gradient, in the sense that it can be adapted to more complex update rules, including AdamW (Appendix~\ref{app:fp-adamw}), and, under a block-diagonal Hessian approximation, decomposed into layer-wise contributions to LR sensitivity (Section~\ref{sec5_layer}).

\begin{figure}
    \centering
    \includegraphics[width=\linewidth]{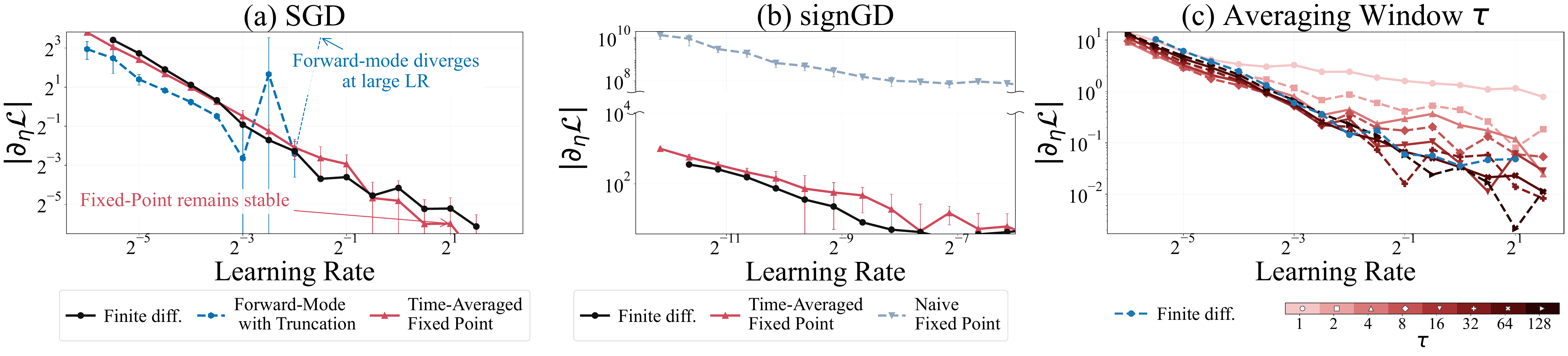}
    \caption{\textbf{FP-LR gradients agree with finite-difference estimates.}
    (a,b) FP-LR captures LR sensitivity for both SGD and signGD.
    (c) Increasing the averaging window improves agreement between the FP-LR gradient and the finite-difference estimate.}
    \label{fig:fp-lr-grad-bound}
\end{figure}

{\bf Naive LR Gradient.}  
As a preliminary step, we return to the LR hypergradient in Eq.~(\ref{eq_ori_grad}). At training step $t$, it is given by $d\mathcal{L}(\theta_t)/d\eta = \nabla\mathcal{L}(\theta_t)^\top \partial\theta_t/\partial\eta$ and can be computed by either reverse-mode or forward-mode differentiation through the training updates~\citep{franceschi2017forward}.
Consider the parameter update rule $\theta_{t+1} = (1 - \eta \gamma)\theta_t - \eta \nabla \mathcal{L}(\theta_t)$, where $\gamma$ is the weight decay coefficient.
Differentiating this update rule with respect to $\eta$ gives the forward-mode recurrence for $\partial\theta_t/\partial\eta$ appearing in Eq.~(\ref{eq_ori_grad}):
\begin{equation}
\partial \theta_{t+1}/\partial \eta = \{I - \eta(H_t + \gamma I)\}\, \partial \theta_{t}/\partial \eta - (\nabla \mathcal{L}(\theta_t) + \gamma \theta_t),
\label{eq:forward-mode-gd}
\end{equation}
where $H_t$ denotes the Hessian of $\mathcal{L}(\theta)$ evaluated at $\theta_t$. If the initial weights are independent of $\eta$, the recurrence starts from $\partial\theta_0/\partial\eta=0$.
For simplicity, we include the $\ell_2$ regularization term in the loss and do not explicitly write the $\gamma$ term below.

As reported in previous work, the naive LR gradient tends to diverge, and we observed the same trend in our experiments~\citep{maclaurin2015gradient,micaelli2021gradient}.
Similarly, we tested the naive fixed-point approximation of Eq.~(\ref{eq:forward-mode-gd}), that is, 
${\partial \theta_*}/{\partial \eta}
=
-{\eta^{-1}}H(\theta_t)^{-1}\gL(\theta_t)$, 
which assumes that $\partial\theta/\partial\eta$ has approximately reached a stationary point.
However, we empirically observed that this approximation was either unstable or differed from the true LR gradient estimated from the finite difference~(\ref{fin_diff}).

{\bf FP-LR gradient.} 
We empirically find that the following time-averaged quantity is computationally stable and provides an accurate approximation to the true LR gradient as a heuristic:
\begin{equation}
    \frac{\partial \theta_*}{\partial \eta}
    =
    -\frac{1}{\eta}\bar{H}_t^{-1}
    \left\langle \gL(\theta_t) \right\rangle,
    \label{eq9:0925}
\end{equation}
where $\bar{H}$ and $\langle \cdot \rangle$ denote time averages over a  finite time window $[t-\tau,t]$.
We then compute $d\L /d \eta =\gL(\langle \theta_t \rangle)^\top \partial \theta_*/\partial \eta$, referred to as the FP-LR gradient.  
Appendix~\ref{app:fp-lr-derivation} gives more details on the numerical computation of time averages and extension to signGD and AdamW.
Note that in practice, neural networks are commonly trained with LR schedules and for a large but finite number of epochs, and thus the averaged gradient is not generally  zero. Therefore, the time-averaged gradient need not vanish even near the end of training.

As shown in Figure~\ref{fig:fp-lr-grad-bound}, the FP-LR gradient agrees with the finite-difference of the LR curve for both SGD and signGD. With a smaller averaging window, the estimate differs more from the finite-difference LR gradient. This difference decreases as the window size ($\tau$) increases. One interpretation is that the time average removes noisy short-timescale  dynamics caused by discrete updates. %
For example, under EoS, the gradient has been reported to exhibit period-two sign oscillations satisfying $\gL(\theta_{t+1}) \approx -\gL(\theta_t)$ \citep{cohen2021gradient,damian2023self}, while  the parameters gradually move toward flatter regions over longer timescales.

\section{Quantitative understanding of LR transfer curves}

In this section, we use the EoS-LR and FP-LR gradients derived above in a complementary manner and demonstrate their utility in understanding the shape of LR curves.

\subsection{Analysis of Scale-invariant models}
\label{sec5_norm}

Parameter blocks routed through a normalization layer, such as  weights immediately preceding a BatchNorm or LayerNorm, or   $W_Q$ and $W_K$ in attention with QK normalization, exhibit scale invariance, i.e., $\mathcal{L}(c\theta) = \mathcal{L}(\theta)$ for any $c > 0$.
Such scale-invariant models have been explored independently in both EoS theory~\citep{arora2022understanding} and empirical studies of better LR transfer~\citep{kosson2025weight}.
Here, we reveal that the LR  gradient becomes zero in this scale-invariant case, implying LR transfer in a more unified manner. 

\paragraph{Homogeneity of $F$.} If the scale-invariant loss function is  twice differentiable, we have $H(c\theta)=c^{-2}H(\theta)$ and thus $\lambda_{max}(H(c\theta))=c^{-2}\lambda_{max}(H(\theta))$.
Therefore, $F_{naive}(c\theta)=c^2 F_{naive}(\theta)$. Similarly, one can immediately check that $F_{batch}$ is homogeneous of degree $2$ as well. In contrast, 
 the EoS conditions of adaptive optimizers  show a different homogeneity. For signGD,
\cite{islamov2026non} show $F_{sign}=2\|\gL\|_1/S_{\infty}$ where $S_{\infty}=\max_{\|d\|_\infty=1} d^\top \nabla^2 \L d$.
This satisfies $F_{sign}(c\theta)=cF_{sign}(\theta)$. 
Note that for Adam, 
\cite{cohen2022adaptive} proposed
$F_{Adam}(\theta) = 2/\left( \frac{1-\beta_1}{1+\beta_1}\lambda_{max}^P  \right)$,
where $\lambda_{max}^P$ is the largest eigenvalue of $P^{-1/2} H P^{-1/2}$ with Adam's diagonal preconditioner $P$. If we neglect the damping term of the denominator, this is also homogeneous of degree $1$ in the same way as in signGD. %
This homogeneity leads to the following.

\begin{figure}
    \centering
    \includegraphics[width=\linewidth]{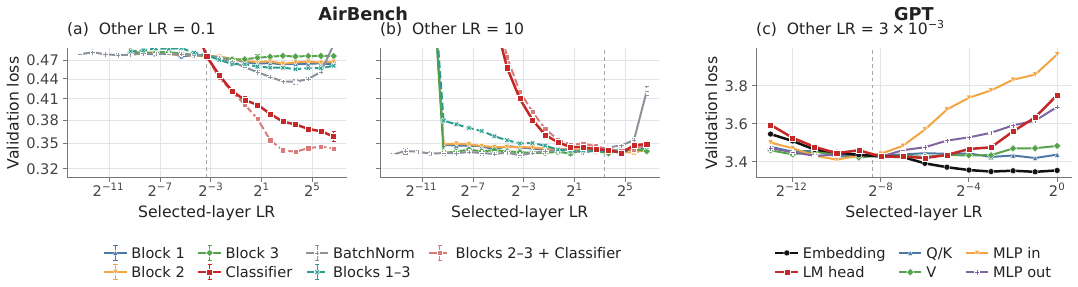}
    \caption{\textbf{Validation loss versus selected-layer LR.} We sweep the LR of each indicated parameter group while fixing the others: (a,b) ResNet on CIFAR-10; (c) GPT on FineWeb.}
    \label{fig:block-lr-resnet-cifar}
\end{figure}

\begin{theorem}
Assume that the model is scale-invariant and $F$ satisfies the homogeneity $F(c\theta)=c^a F(\theta)$  for $a>0$.  
Then, the solution of the minimization problem in Lemma \ref{lem3_1} satisfies
 ${d\L}/{d\eta}=0$.
 \label{thm1}
\end{theorem}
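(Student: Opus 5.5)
The plan is to combine Lemma~\ref{lem3_1} with Euler's identity for homogeneous functions, applied once to $\L$ and once to $F$. By Lemma~\ref{lem3_1}, at the constrained minimum we have $d\L/d\eta=-\lambda$, where $\lambda$ is the multiplier satisfying the stationarity condition $\gL(\theta)+\lambda\nabla F(\theta)=0$. It therefore suffices to show that $\lambda=0$ at any such stationary point. I will do this by taking the inner product of the stationarity condition with $\theta$ itself, rather than with $\gL$ as was done for Eq.~(\ref{eq_dL_naive}).

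\textbf{Euler identities.} The first step is to differentiate $\L(c\theta)=\L(\theta)$ with respect to $c$ and set $c=1$. This gives $\theta^\top\gL(\theta)=0$, so the gradient of a scale-invariant loss is orthogonal to $\theta$. The second step is to differentiate $F(c\theta)=c^aF(\theta)$ in the same way, which gives $\theta^\top\nabla F(\theta)=aF(\theta)$. On the constraint set we have $F(\theta)=\eta$, so $\theta^\top\nabla F(\theta)=a\eta$.

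\textbf{Conclusion.} Taking the inner product of the stationarity condition with $\theta$ yields
\begin{equation*}
0=\theta^\top\gL+\lambda\,\theta^\top\nabla F = \lambda a\eta .
\end{equation*}
Since $a>0$ and the learning rate satisfies $\eta>0$, this forces $\lambda=0$. Lemma~\ref{lem3_1} then gives $d\L/d\eta=0$.

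\textbf{Main obstacle.} The delicate point is regularity rather than the algebra above. Euler's identity for $F$ requires $F$ to be differentiable along the ray through $\theta$. For $F_{naive}$ this fails where $\lambda_{max}$ is degenerate, and for $F_{sign}$ it can fail at nonsmooth points of $\|\gL\|_1$ and $S_\infty$. I will rely on the standing assumption of Section~\ref{sec3} that $F$ is differentiable at the solution, so Euler's identity holds there; otherwise one would pass to one-sided directional derivatives along the ray. I will also note that the chain-rule step $1=\nabla F^\top\partial\theta/\partial\eta$ used inside Lemma~\ref{lem3_1} is taken as given. The argument also suggests an interpretation: rescaling $\theta\mapsto c\theta$ moves along the constraint family $\eta\mapsto c^a\eta$ without changing the loss, which is consistent with a flat LR curve.
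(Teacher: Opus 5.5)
Your proposal is correct and follows essentially the paper's own proof: Lemma~\ref{lem3_1} gives $d\L/d\eta=-\lambda$, differentiating in $c$ at $c=1$ gives $\theta^\top\gL=0$ and $\theta^\top\nabla F=aF=a\eta$, and pairing $\gL+\lambda\nabla F=0$ with $\theta$ forces $\lambda a\eta=0$, hence $\lambda=0$. Your regularity caveat is a reasonable addition, though differentiability of $F$ at $\theta$ itself (not along the whole ray) already suffices for the Euler identity at $c=1$.
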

The proof is shown in Appendix \ref{secGD_SI}. The point is that $F$'s homogeneity leads to $\theta^\top \nabla F= a F$. 
This forces the multiplier $-\lambda$ in the naive case (\ref{eq_lag}) to cancel out, yielding a zero LR gradient.
Note that for GD in the scale-invariant model, FP-LR without time averaging is easily shown to be 0 as in Appendix \ref{FPLR_SI}. In comparison, EoS-LR is useful in that it accommodates various optimizers through the choice of $F$.

\noindent
{\bf A distinctive property of Sign gradient with weight decay.} 
Thus far, the EoS-LR formulation has required only  loss minimization. In practice, learning may involve additional explicit or implicit regularization, making its effect on EoS-LR an interesting question.
Recently, \cite{kosson2025weight} empirically reported that the combination of weight decay (L2 regularization) and Adam is a key factor for LR transfer.
More specifically, assuming $\gamma \eta = const.$ for the regularization coefficient $\gamma$ and a gradient norm of order 1, they argue that the parameter norm satisfies $\|\theta\|^2 = C\eta^2$ with an uninteresting constant $C$, which leads to a stable transfer.
Although their derivation of this regularization scale is intuitive, the bound $\|\theta\|^2 \leq C\eta^2$ follows readily from these assumptions, as shown in Lemma \ref{lem2}.
We find that the advantage of this setting can indeed be verified through EoS-LR analysis in the scale-invariant model.
\begin{theorem}
Consider the minimization problem (\ref{eq_ori_opt}) with an L2 constraint  $\|\theta\|^2 \leq C \eta^2$, that is,  $\theta_*=\text{argmin}_\theta \mathcal{L}(\theta)+\lambda_1 F(\theta)+ \lambda_2(\|\theta\|^2-C \eta^2)/2$ with $\lambda_2 \geq 0$. 
Under the same assumption of scale invariance as in Theorem \ref{thm1}, 
\begin{equation}
d\L(\theta_*)/d\eta= (1-a) \|\gL\|^2/(\gL^\top \nabla F). 
\end{equation}
 In particular, for $a=1$, the LR gradient is zero.   
\label{thm2}
\end{theorem}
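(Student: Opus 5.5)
The plan is to repeat the argument of Lemma~\ref{lem3_1}, now with two constraints, and then use scale invariance twice, by taking the stationarity condition in the direction $\theta$ and in the direction $\gL$. I will write $\theta' := \partial\theta_*/\partial\eta$ and assume, as in Lemma~\ref{lem3_1}, that $\theta_*$ depends differentiably on $\eta$ and that $\gL^\top\nabla F\neq 0$ wherever the formula is used.

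\textbf{Step 1 (stationarity).} At $\theta_*$ the first-order condition is
\begin{equation*}
\gL + \lambda_1 \nabla F + \lambda_2\theta = 0 .
\end{equation*}
The constraint $\eta = F(\theta_*)$ holds identically in $\eta$, so $\nabla F^\top\theta' = 1$ as in Lemma~\ref{lem3_1}. For the norm constraint I split into cases by complementary slackness.
\begin{itemize}
\item If $\lambda_2>0$, the constraint is active, so $\|\theta_*\|^2 = C\eta^2$. I will assume it stays active for nearby $\eta$. Differentiating then gives $\theta^\top\theta' = C\eta$.
\item If $\lambda_2=0$, the norm term drops out, and I treat this case separately at the end.
\end{itemize}

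\textbf{Step 2 (LR gradient in terms of the multipliers).} In the active case I take the inner product of the stationarity condition with $\theta'$ and use the two relations from Step 1:
\begin{equation*}
\frac{d\L}{d\eta} = \gL^\top\theta' = -\lambda_1\nabla F^\top\theta' - \lambda_2\theta^\top\theta' = -\lambda_1 - \lambda_2 C\eta .
\end{equation*}

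\textbf{Step 3 (scale invariance, direction $\theta$).} Differentiating $\L(c\theta)=\L(\theta)$ in $c$ at $c=1$ gives $\theta^\top\gL = 0$. Differentiating $F(c\theta)=c^aF(\theta)$ in the same way gives Euler's relation $\theta^\top\nabla F = aF(\theta) = a\eta$. Taking the inner product of the stationarity condition with $\theta$ then yields
\begin{equation*}
0 + \lambda_1 a\eta + \lambda_2\|\theta\|^2 = 0 .
\end{equation*}
Substituting the active constraint $\|\theta\|^2 = C\eta^2$ gives $\lambda_2 C\eta = -a\lambda_1$. Plugging this into Step 2 gives
\begin{equation*}
\frac{d\L}{d\eta} = -(1-a)\lambda_1 .
\end{equation*}

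\textbf{Step 4 (direction $\gL$).} Taking the inner product of the stationarity condition with $\gL$ gives
\begin{equation*}
\|\gL\|^2 + \lambda_1\gL^\top\nabla F + \lambda_2\gL^\top\theta = 0 .
\end{equation*}
The last term vanishes by scale invariance, so $\lambda_1 = -\|\gL\|^2/(\gL^\top\nabla F)$. This gives the claimed formula, and for $a=1$ the LR gradient is zero.

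\textbf{Inactive case ($\lambda_2=0$).} Here Step 3 gives $\lambda_1 a\eta = 0$, hence $\lambda_1=0$ since $a,\eta>0$. Stationarity then forces $\gL=0$. Lemma~\ref{lem3_1} gives $d\L/d\eta = -\lambda_1 = 0$, which agrees with the formula read with numerator $\|\gL\|^2=0$; this is the situation of Theorem~\ref{thm1}.

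\textbf{Main obstacle.} The delicate step is the constraint bookkeeping, not the algebra. I need the active set to be locally constant in $\eta$ so that $\|\theta_*\|^2=C\eta^2$ can be differentiated. I also need $\theta_*(\eta)$ to be differentiable, for instance via an implicit-function argument on the KKT system, taken as a standing assumption in the same spirit as Lemma~\ref{lem3_1}. I would state both explicitly as regularity assumptions rather than prove them.
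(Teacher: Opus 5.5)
Your proposal is correct and follows essentially the same route as the paper's proof: the same KKT split on $\lambda_2$, the inner product of the stationarity condition with $\theta$ to obtain $\lambda_2 C\eta = -a\lambda_1$ and hence $d\L/d\eta = (a-1)\lambda_1$, and the inner product with $\gL$ to identify $\lambda_1 = -\|\gL\|^2/(\gL^\top\nabla F)$. You are more explicit than the paper in three places: the use of $\theta^\top\gL=0$ in that last step, the regularity assumptions (differentiability of $\theta_*(\eta)$ and a locally constant active set), and the inactive case. There, since $\gL=0$ and the formula reads $0/0$, the right conclusion is simply $d\L/d\eta=0$, as the paper states; it is not agreement with the formula.
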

The proof is shown in Appendix \ref{secGD_WDSI}.
For signGD and simplified Adam, where $a=1$, the LR gradient indeed vanishes, eliminating LR sensitivity.
We can also derive an explicit, though more complicated, expression for the LR gradient with L2 regularization but without scale invariance as in Eq.~(\ref{eq_wo_norm}).
Thus, the LR gradient is generally nonzero if either L2 regularization or scale invariance is absent.

In ResNet (Figure~\ref{fig:block-lr-resnet-cifar}), every layer except the final layer is immediately followed by normalization.
Thus, all layers except the final layer and the BatchNorm layers are scale-invariant, and the loss is not sensitive to their learning rates.
In GPT, the QK layers are immediately followed by QK normalization.
For the other layers, normalization is applied after the residual connection, so they are not fully scale-invariant.
In particular, the MLP input layer is not immediately followed by normalization and is therefore highly sensitive to changes in its learning rate.

\subsection{Bounds on LR gradients}
\label{sec5_bounds}

Figure~\ref{fig:lrgrad-exponents} shows that LR gradient amplitudes across widths of CNNs, ResNets, and GPT scale approximately as $\eta^{-3/2}$ over a broad LR range, generally between $\eta^{-1}$ and $\eta^{-2}$.
A natural question is whether our EoS-LR or FP-LR can explain these empirical slopes.
However, these expressions include the gradient norm and higher-order derivatives, which depend implicitly on the LR, making these slopes nontrivial to determine.
Here, we establish consistency with the experimental results by evaluating bounds under several assumptions.

\paragraph{Case of EoS-LR.}
The Cauchy--Schwarz inequality straightforwardly gives
\[
|d\L/d\eta| \geq 2\|\gL\|/(\eta^2 \|\nabla \lambda_{max}\|).
\]
(i) Since both theoretical and empirical studies of $\|\nabla \lambda_{max}\|$ are limited in the literature, we evaluated it numerically and observed scaling of order $\eta^{-1}$, as shown in Fig~\ref{fig:lambda_grad_lr_exponents}.
(ii) For the gradient norm, a first-order expansion of the loss update gives
$\mathcal{L}_{t+1} \approx \mathcal{L}_{t} - \eta \|\gL_t\|^2$.
Assuming that training progresses at order one, as in~\cite{yang2020feature,chizat2024feature}, we have $\|\gL_t\|^2=C/\eta$.
The same scaling has also been reported in  experiments on implicit gradient regularization~\citep{barrett2020implicit}, supporting this assumption.
Combining (i) and (ii) yields $|d\L/d\eta| = \Omega(\eta^{-3/2})$, consistent with Figure \ref{fig:lrgrad-exponents}.

\paragraph{Comparison with FP-LR.}
When the training loss is sufficiently small, the Hessian is expected to be dominated by its Gauss--Newton part, motivating the assumption $H\succeq0$, which gives
$\|\gL\|^2/\eta(\lambda_{max}+\gamma) \leq
|d\L/d\eta| \leq \|\gL\|^2/\eta\gamma$.
Depending on additional assumptions, these upper and lower bounds can scale as either $1/\eta$ or $1/\eta^2$.
First, assuming $\eta \|\gL\|^2 = const.$ as in the above EoS case, both bounds scale as $1/\eta^2$.
If we further assume EoS, the lower bound scales as $1/\eta$.
If we impose $\eta \gamma = const.$~\citep{kosson2025weight}, the upper bound also scales as $1/\eta$.
Thus, the range of bounds for FP-LR is consistent with the experimental results, but the bounds themselves do not scale as $\eta^{-3/2}$.
In this sense, EoS-LR provides a more specific quantitative characterization.

\begin{figure}
    \centering
    \includegraphics[width=0.95\linewidth]{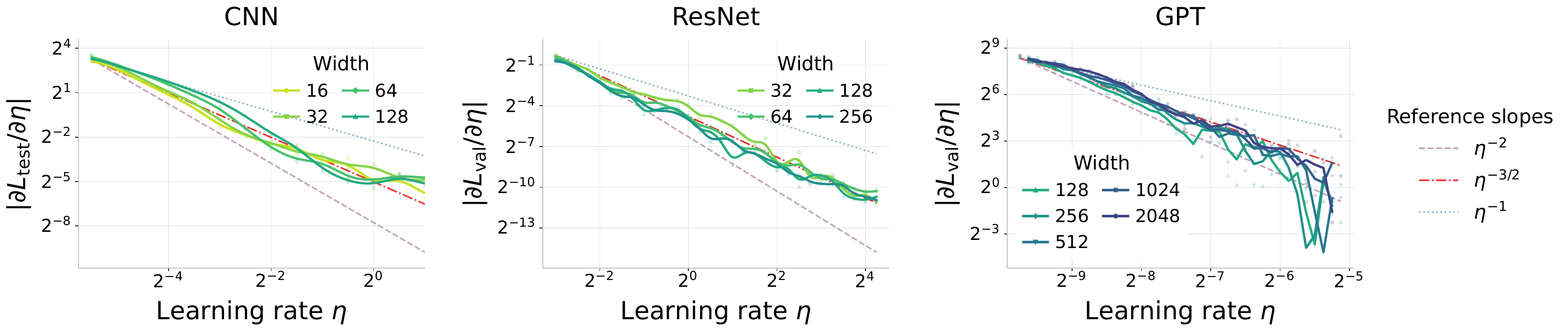}
    \caption{\textbf{LR gradients of the evaluation loss exhibit $\eta^{-3/2}$ scaling.}
    Log--log plots show the absolute LR gradients for CNN, ResNet, and GPT, estimated by finite differences from LR sweeps.
    We use the test loss for CNN and the validation loss for ResNet and GPT; colors indicate network width.
    Reference lines show the slopes $\eta^{-2}$, $\eta^{-3/2}$, and $\eta^{-1}$ suggested by the bounds.}
    \label{fig:lrgrad-exponents}
\end{figure}

\subsection{Layerwise Profiling}
\label{sec5_layer}

\begin{figure}
    \centering
    \includegraphics[width=\linewidth]{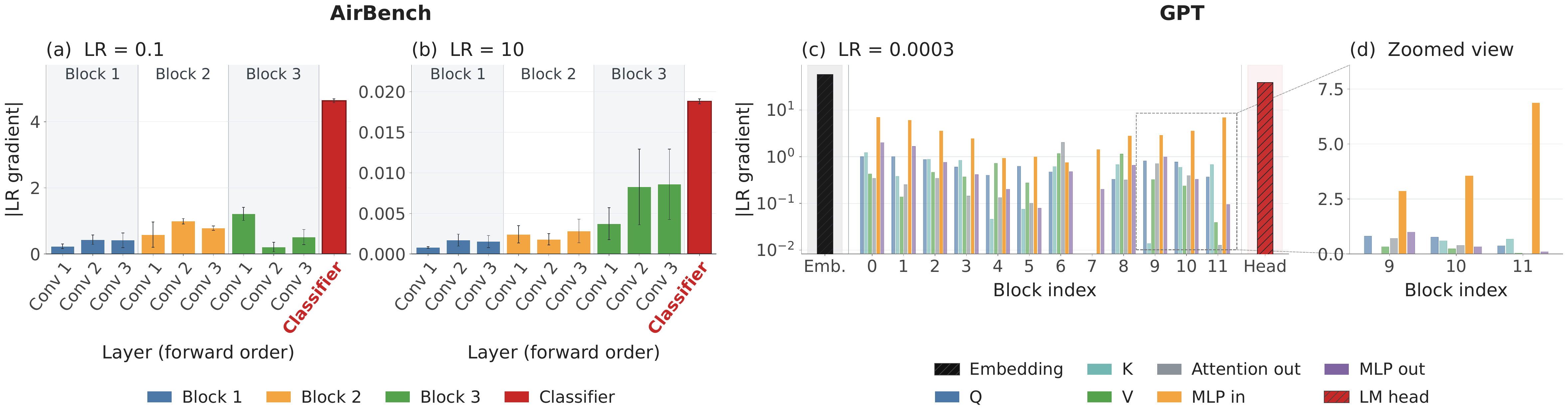}
    \caption{\textbf{Layerwise FP-LR gradient.} (a,b) ResNet on CIFAR-10 with two different LRs. (c) GPT on FineWeb. (d) Blocks $9$--$11$ of (c), shown on a linear scale.}
    \label{fig:gpt-qknorm-lrgrad-profile}
\end{figure}

Motivated by the benefits of layer-wise LRs~\citep{yang2021tuning, he2026one}, we use the FP-LR expression to decompose the LR gradient into layer-wise contributions and identify the layers that contribute most to LR sensitivity.
Under a layer-wise block-diagonal approximation to the time-averaged Hessian, the FP-LR gradient decomposes into layer-wise components as
\begin{equation}
\frac{d\L}{d\eta}= -\sum_l \frac{1}{\eta}\nabla_{\theta^{(l)}}\L^\top (\bar H^{(l)})^{-1}\langle\nabla_{\theta^{(l)}}\L\rangle
= \left.\sum_l\frac{\partial\L}{\partial\eta_l}\right|_{\eta_l=\eta},
\label{eq:layer-wise-fp-lr}
\end{equation}
where $\theta^{(l)}$ and $\eta_l$ are the parameters and LR of layer $l$, with update $\theta_{t+1}^{(l)}=\theta_t^{(l)}-\eta_l\nabla_{\theta^{(l)}}\L(\theta_t)$, and $\bar H^{(l)}=\nabla_{\theta^{(l)}}^2\L(\langle\theta_t\rangle)$. Time averages and gradient evaluation follow Section~\ref{sec4}; implementation details are in Appendix~\ref{app:layer-wise-kfac}.
Note that this simple decomposition is an advantage of FP-LR, compared to  EoS-LR. EoS-LR includes the inner product of all parameters in both numerator and denominator in Eq. (\ref{eq_dL_naive}) and this makes the explicit decomposition more non-trivial. 

As is shown in Figure~\ref{fig:gpt-qknorm-lrgrad-profile}(a,b), for ResNet, the final layer (Classifier) has a larger amplitude of the layer-wise FP-LR gradient than the other layers.
This result is consistent with Figure~\ref{fig:block-lr-resnet-cifar}, which shows that the loss varies substantially with the LR of the final layer but only slightly with that of the input layer.
For GPT, the embedding and output layers have  particularly large layer-wise LR gradients, and the MLP input layers also have large LR gradients within the Transformer blocks.
These profiles provide a quantitative measure of each layer's contribution to LR sensitivity.

\begin{figure}
    \centering
    \includegraphics[width=\linewidth]{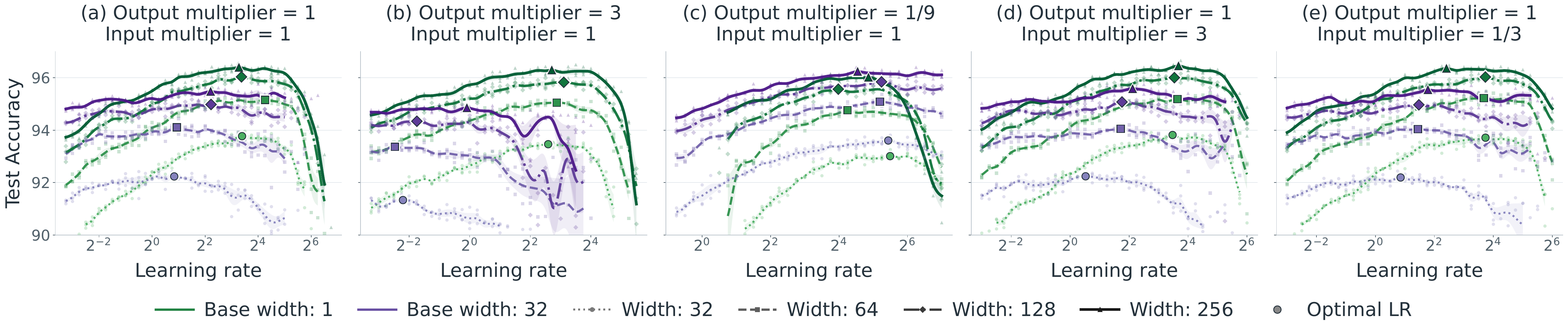}
    \caption{
    \textbf{Accuracy and LR transfer under different input and output multipliers.}
    Test accuracy for ResNet on CIFAR-10 under $\mu$P, comparing Base Width=$1$ and $32$.
    (a) Both multipliers are $1$.
    (b,c) Only the output multiplier changes to $3$ or $1/9$, respectively.
    (d,e) Only the input multiplier changes to $3$ or $1/3$, respectively.
    }
    \vspace{-10pt}
    \label{fig:base_width_resnet_cifar}
\end{figure}

Figure~\ref{fig:base_width_resnet_cifar} demonstrates that LR transfer across widths is also sensitive to the output-layer scale, as suggested by Figure~\ref{fig:gpt-qknorm-lrgrad-profile}. 
Here, the input and output multipliers rescale the first layer's output and the final logits, respectively. Base Width is the reference width for increasing the width. These quantities are commonly used in $\mu$P studies (see Appendix~\ref{app:base-width-multipliers} for details). Base Width is often set to the smallest width used in experiments, and this motivates the comparison with Base Width=$1$, which rigorously obeys the $\mu$P scaling.
With both multipliers at $1$, Base Width=$1$ gives higher accuracy than Base Width=$32$ (a).
Reducing the output multiplier to $1/9$ improves accuracy at Base Width=$32$ to roughly the same level as Base Width=$1$ with default multipliers (c). Increasing it to $3$ instead lowers accuracy (b).
At Base Width=$1$, reducing the output multiplier lowers accuracy and makes the optimal LR differ across widths, making LR transfer less reliable (c).
Finally, changing the input multiplier has little effect (d,e).
Thus, we can conclude that LR transfer significantly depends on the output-layer scale and favors scaling the output layer according to $\mu$P from Base Width=$1$.

\section{Limitations and Discussions}

We provide a unified perspective on the LR curve from the analyses of the LR gradient and expect that our results will serve as a foundation for better understanding the mechanisms of stable LR tuning.
 The following directions could be important for further development. (i) Our characterization of the LR gradient under EoS via Lagrange multipliers will be broadly applicable. That is, as future theoretical or empirical analyses reveal new EoS relations in different optimizers, these can be substituted into our formulation to derive the LR gradient. For example, EoS for preconditioned optimizers in more practical settings incorporating moving averages or damping remains unresolved in the EoS literature~\citep{islamov2026non}. Clarifying these relations would also enable the characterization of LR curves through our EoS-LR gradient. (ii) Since this work is restricted to evaluating first derivatives of the LR curve,  evaluating higher-order derivatives is also an interesting direction for understanding the shape of the curves. However, it is noteworthy that in both the EoS and FP formulations, additional higher-order terms arise as the derivative order increases. The resulting expressions are therefore unlikely to be as simple as those obtained here. Even the present expressions require careful numerical computation.    (iii) We showed that the LR gradient vanishes for scale-invariant layers, yielding a width-independent result. Analyzing the model-size dependence of our relational equations in more general cases is an interesting theoretical direction. Such an analysis requires understanding the universality of the derivative of the largest eigenvalue, which involves third derivatives of the loss. While the theory of the large-size limit of such higher-order terms remains under development~\citep{noci2024super,lauditi2026spectral}, our work provides a foundation for translating future theoretical advances into evaluations of the LR gradient.

\section*{Acknowledgements}

The authors acknowledge support from JST FOREST (Grant No. JPMJFR226Q) and FRONTia, a Japanese national program led by METI and NEDO. RK is also supported by JST BOOST (Grant No. JPMJBY25B0), and SI by JSPS KAKENHI (Grant No. 26KJ1108) and JST ACT-X (Grant No. JPMJAX25C1).

\newpage
\bibliographystyle{plainnat}
\bibliography{reference}

\newpage
\appendix
\raggedbottom
\setlength{\floatsep}{8pt plus 2pt minus 2pt}
\setlength{\textfloatsep}{10pt plus 2pt minus 2pt}
\setlength{\intextsep}{6pt plus 2pt minus 2pt}
\renewcommand{\floatpagefraction}{0.8}
\makeatletter
\setlength{\@fptop}{0pt}
\setlength{\@fpsep}{10pt plus 2pt minus 2pt}
\setlength{\@fpbot}{0pt plus 1fil}
\makeatother
\part*{Appendices}

\section{Derivations of LR gradients}

\subsection{Scale invariance}
\label{secGD_SI}

\begin{theorem}[Theorem \ref{thm1}]
Assume scale invariance $\L(c \theta)= \L(\theta)$ and $F(c\theta)= c^a F(\theta)$  ($c,a>0$). Consider the minimization of $\L(\theta)$ with a constraint $\eta = F(\theta)$, and its formulation with a Lagrange multiplier:
\begin{equation}
\theta_* :=\text{argmin}_\theta \L(\theta) + \lambda F. 
\end{equation}
 Then,
\begin{equation}
\frac{d\L(\theta_*)}{d\eta}=0.
\end{equation}
\end{theorem}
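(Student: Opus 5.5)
The argument combines Lemma~\ref{lem3_1}, which gives $d\L/d\eta=-\lambda$ at the constrained minimum, with two Euler-type identities obtained by differentiating the homogeneity relations at $c=1$. Differentiating $\L(c\theta)=\L(\theta)$ in $c$ at $c=1$ gives $\theta^\top\nabla\L(\theta)=0$ for every $\theta$. Differentiating $F(c\theta)=c^aF(\theta)$ in the same way gives $\theta^\top\nabla F(\theta)=aF(\theta)$. These are the only two facts about $\L$ and $F$ the proof uses, beyond the stationarity condition.

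Next, I would take the inner product of the first-order optimality condition $\nabla\L(\theta_*)+\lambda\nabla F(\theta_*)=0$ with $\theta_*$ itself. (This differs from the text's derivation of Eq.~(\ref{eq_dL_naive}), which pairs the condition with $\gL$.) The first term vanishes by scale invariance. The second equals $\lambda a F(\theta_*)$, and since the constraint is active, $F(\theta_*)=\eta$. Hence $\lambda a\eta=0$. Because $a>0$ and the learning rate satisfies $\eta>0$, this forces $\lambda=0$. Lemma~\ref{lem3_1} then gives $d\L(\theta_*)/d\eta=-\lambda=0$.

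The main point needing care is that the Lagrange-multiplier stationarity and Lemma~\ref{lem3_1} actually apply. This requires $\theta_*\neq 0$ and $\nabla F(\theta_*)\neq0$, so that the constraint is regular and $\theta_*(\eta)$ depends differentiably on $\eta$, which is exactly what the identity $\nabla F^\top\partial\theta/\partial\eta=1$ in Lemma~\ref{lem3_1} uses. I would state these as standing regularity assumptions, in the same spirit as the differentiability assumptions already placed on $\L$ and $F$.

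One can also see the result directly, which serves as a consistency check. Along the ray $c\theta_*$ the loss is constant while $F$ scales as $c^a$, so every $\eta'>0$ is reachable with no change in loss. Thus the constrained minimum value does not depend on $\eta$, and its derivative is zero.
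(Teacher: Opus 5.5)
Your proof is correct and is essentially the paper's argument: Lemma~\ref{lem3_1} plus the two Euler identities, pairing the stationarity condition with $\theta_*$ to get $\lambda a\eta=0$ and hence $\lambda=0$. One small aside: the conditions $\theta_*\neq0$ and $\nabla F(\theta_*)\neq0$ that you add as standing assumptions are automatic here, since $\theta_*^\top\nabla F(\theta_*)=aF(\theta_*)=a\eta>0$ already forces both.
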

{\it Proof.} As is shown in Lemma \ref{lem3_1},
\begin{equation}
\frac{d\L}{d\eta} = -\lambda.
\end{equation}
Here, note that one can easily confirm 
\begin{equation}
\theta^\top \nabla\L(\theta) =0
\label{eq_thetaL}
\end{equation}
by differentiating $\L(c \theta)= \L(\theta)$ with respect to $c$. Similarly, we have
\begin{equation}
 \theta^\top \nabla F(\theta) = a F
\label{eq_thetaF}
\end{equation}
by differentiating $F(c \theta)= c^a F(\theta)$ with respect to $c$.
Taking the inner product of both sides of  $\gL + \lambda \nabla F=0$ with $\theta$, we have
\begin{equation}
\lambda \theta^\top \nabla F= \lambda a \eta =0.
\end{equation}
Since $\eta>0$ and $a>0$, we have $\lambda=0$ and 
\begin{equation}
\frac{dL}{d\eta}=0.
\end{equation}
\qed

\subsection{ Scale invariance with weight decay}
\label{secGD_WDSI}

\begin{lemma}
Consider a gradient update
$\theta_{t+1}= (1- \eta \lambda) \theta_t - \eta  g(\theta_t)$. Assume (i) $\|g(\theta_t)\| \leq C_0$ for all $t\geq 0$ for a constant $C_0>0$
and (ii) a constant $\eta \lambda = k$ ($0<k<1$). 
Then, 
\begin{equation}
\|\theta_{t+1}\|^2 \leq   C \eta^2
\end{equation}
for a constant $C>0$ and sufficiently large $t$.
\label{lem2}
\end{lemma}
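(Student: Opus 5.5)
The plan is to unroll the linear recursion and bound it with the triangle inequality, using the contraction factor $1-k$. Since $\eta\lambda = k$ with $0<k<1$, the update reads $\theta_{t+1} = (1-k)\theta_t - \eta\, g(\theta_t)$. The norm then satisfies the scalar recursion
\begin{equation*}
\|\theta_{t+1}\| \leq (1-k)\|\theta_t\| + \eta C_0 ,
\end{equation*}
which uses assumption (i) for the gradient term and $0<1-k<1$ for the decay term.

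Iterating this inequality from $t=0$ gives
\begin{equation*}
\|\theta_{t+1}\| \leq (1-k)^{t+1}\|\theta_0\| + \eta C_0\sum_{s=0}^{t}(1-k)^s \leq (1-k)^{t+1}\|\theta_0\| + \frac{\eta C_0}{k}.
\end{equation*}
The geometric sum is bounded independently of $t$ because $k$ is fixed, so the steady-state part is of order $\eta$.

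The only point requiring care is the transient term $(1-k)^{t+1}\|\theta_0\|$, because $\theta_0$ is not tied to $\eta$. For fixed $\eta>0$ and finite $\|\theta_0\|$, this term decays geometrically. Hence there is a $T$ (depending on $\|\theta_0\|$, $\eta$, $k$, $C_0$) such that for $t\geq T$ we have $(1-k)^{t+1}\|\theta_0\| \leq \eta C_0/k$. Concretely, one can take $T \geq \log\big(k\|\theta_0\|/(\eta C_0)\big)/\log(1/(1-k))$. This is exactly where the phrase ``sufficiently large $t$'' in the statement is used.

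For such $t$, $\|\theta_{t+1}\| \leq 2\eta C_0/k$. Squaring gives $\|\theta_{t+1}\|^2 \leq C\eta^2$ with $C = 4C_0^2/k^2$, which does not depend on $\eta$ since $k$ and $C_0$ are fixed. Any constant strictly larger than $C_0^2/k^2$ works equally well by enlarging $T$.
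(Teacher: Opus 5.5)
Your proof is correct and follows essentially the same route as the paper: the triangle-inequality recursion is unrolled into a geometric sum bounded by $C_0/\lambda = \eta C_0/k$, and $T$ is chosen so the transient $(1-k)^{t+1}\|\theta_0\|$ is at most that size. Squaring then gives the same constant $C = 4C_0^2/k^2$, with a threshold $T$ that likewise depends on $\eta$. The paper's intermediate bound $(1-k)^{T+1}(\|\theta_0\|+C_0/\lambda)+C_0/\lambda$ is only a cosmetically looser version of yours.
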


{\it Proof.} 
We have 
\begin{equation}
\|\theta_{t+1}\|\leq (1- \eta \lambda) \|\theta_t\|+ \eta \|g_t\|.
\end{equation}
Therefore, 
\begin{align}
\|\theta_{T+1}\| &\leq (1-\eta\lambda)^{T+1}\|\theta_0\|
+ \sum_{j=0}^T \eta (1-\eta\lambda)^j \|g_{T-j}\| \\
&\leq (1-\eta\lambda)^{T+1}\|\theta_0\|
+ \eta C_0 \frac{1-(1-\eta \lambda)^{T+1}}{\eta \lambda}.
\end{align}

Choose an integer $T\geq 0$ such that $T > {\ln\!\left(\frac{C_0/\lambda}{\|\theta_0\|+C_0/\lambda}\right)}/{\ln(1-\eta\lambda)}$.
Since $0<1-\eta\lambda<1$, this ensures
$(1-\eta\lambda)^{T+1}(\|\theta_0\|+C_0/\lambda)\leq C_0/\lambda$.
The preceding bound therefore gives
\begin{equation}
\|\theta_{T+1}\|
\leq (1-\eta\lambda)^{T+1}(\|\theta_0\|+C_0/\lambda)+C_0/\lambda
\leq 2C_0/\lambda.
\end{equation}
In addition, by using the assumption $\eta \lambda = k$, 
\begin{equation}
\|\theta_{T+1}\|^2 \leq 4C_0^2  \eta^2/k^2.
\end{equation} 
Then, setting $C:=4C_0^2/k^2$, we obtain, for all $t\geq T$,
\begin{equation}
\|\theta_{t+1}\|^2 \leq C  \eta^2.
\end{equation}
\qed

\cite{kosson2025weight} derived an equality for
$\theta_{t+1}=\theta_t-\eta g(\theta_t)$
by assuming $\theta_t^\top\nabla g(\theta_t)=0$,
but this assumption is nontrivial except for ordinary GD.
Our assumption is more general in that it does not explicitly depend on the optimizer.

As shown above, assuming $\eta\lambda=k$, bounded gradients yield, for sufficiently large $t$,
$\|\theta\|^2\leq C\eta^2$.
Thus, it is natural for the L2 norm of the parameters to be bounded
by a constant times $\eta$.
We therefore consider imposing
$\|\theta\|^2\leq C\eta^2$
in our Lagrange multiplier method as a minimal constraint
that the converged parameters should satisfy.
Then, for scale-invariant layers, we obtain the following:
\begin{theorem}[Theorem \ref{thm2}]
Assume scale invariance $\L(c \theta)= \L(\theta)$ and $F(c\theta)= c^a F(\theta)$ ($c,a>0$). Consider the minimization of $\L(\theta)$ with constraints $\eta=F(\theta)$ and $\|\theta\|^2 \leq C \eta^2$:
\begin{equation}
\theta_*= \text{argmin}_\theta \L(\theta) + \lambda_1 F + \frac{\lambda_2}{2} (\|\theta\|^2- C \eta^2)
\end{equation}
with Karush–Kuhn–Tucker (KKT) multipliers $\lambda_1$ and $\lambda_2 \geq 0$. Then,
\begin{equation}
\frac{d\L(\theta_*)}{d\eta}=(1-a) \frac{\|\gL\|^2}{ \gL^\top \nabla F}.
\end{equation}
In particular, for $a=1$, 
\begin{equation}
 \frac{d\L(\theta_*)}{d\eta}=0.  
\end{equation}
\end{theorem}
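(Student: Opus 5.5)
We extend the argument of Lemma \ref{lem3_1} to the case with two multipliers. We then use the two homogeneity identities from the proof of Theorem \ref{thm1}, namely $\theta^\top\gL=0$ and $\theta^\top\nabla F=aF=a\eta$, to pin down both multipliers.

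First, the stationarity (KKT) condition at $\theta_*$ reads
\begin{equation}
\gL+\lambda_1\nabla F+\lambda_2\theta=0 .
\end{equation}
Next, we differentiate the constraints along the solution path $\eta\mapsto\theta_*(\eta)$. The equality constraint gives $\nabla F^\top\partial\theta/\partial\eta=1$, exactly as in Lemma \ref{lem3_1}. For the norm constraint, we distinguish two cases.
\begin{itemize}
\item If the constraint is inactive, complementary slackness forces $\lambda_2=0$, and we are back in the setting of Theorem \ref{thm1}.
\item If it is active, then $\|\theta\|^2=C\eta^2$ holds along the path. Differentiating gives $\theta^\top\partial\theta/\partial\eta=C\eta$.
\end{itemize}
Contracting the KKT equation with $\partial\theta/\partial\eta$ then yields the chain-rule analogue of Lemma \ref{lem3_1}:
\begin{equation}
\frac{d\L}{d\eta}=-\lambda_1-\lambda_2 C\eta .
\end{equation}

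Second, we take the inner product of the KKT equation with $\theta$. Since $\theta^\top\gL=0$ and $\theta^\top\nabla F=a\eta$, this gives
\begin{equation}
a\eta\lambda_1+\lambda_2\|\theta\|^2=0 .
\end{equation}
On the active constraint, $\|\theta\|^2=C\eta^2$, so $\lambda_2C\eta=-a\lambda_1$, and hence
\begin{equation}
\frac{d\L}{d\eta}=-(1-a)\lambda_1 .
\end{equation}
Third, we take the inner product of the KKT equation with $\gL$. The term $\lambda_2\theta^\top\gL$ vanishes, leaving $\|\gL\|^2=-\lambda_1\gL^\top\nabla F$. This determines $\lambda_1=-\|\gL\|^2/(\gL^\top\nabla F)$. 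Substituting gives the claimed formula $(1-a)\|\gL\|^2/(\gL^\top\nabla F)$, and setting $a=1$ gives zero.

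In the inactive case, the same computation with $\lambda_2=0$ gives $a\eta\lambda_1=0$, hence $\lambda_1=0$. Then $d\L/d\eta=0$, and also $\gL=0$. The formula holds there in the degenerate sense that both sides vanish, provided $\gL^\top\nabla F$ is interpreted appropriately. I would note this case explicitly and focus the statement on the active constraint.

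The main obstacle is justifying the chain-rule step $d\L/d\eta=-\lambda_1-\lambda_2C\eta$. It requires that the active set does not change locally in $\eta$, so that the norm constraint can be differentiated along the path. It also requires $\theta_*(\eta)$ to be differentiable, which implicitly assumes a nondegenerate constrained minimizer. As in Lemma \ref{lem3_1}, I would state these regularity assumptions rather than prove them, and assume the active case with $\gL^\top\nabla F\neq0$.
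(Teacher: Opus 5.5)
Your proposal is correct and follows the paper's proof essentially step for step. It uses the same KKT stationarity equation and the same two-case split (the paper splits on $\lambda_2=0$ versus $\lambda_2\neq0$ rather than inactive/active), contracts with $\theta$ to obtain $\lambda_2C\eta=-a\lambda_1$, uses $d\L/d\eta=-\lambda_1-\lambda_2C\eta$, and contracts with $\gL$ to solve for $\lambda_1$. Your explicit regularity assumptions (locally fixed active set, differentiable $\theta_*(\eta)$, $\gL^\top\nabla F\neq0$) and your note on the $0/0$ form in the inactive case are points the paper leaves implicit.
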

{\it Proof.}
The KKT conditions of this constrained minimization problem are given by 
\begin{equation}
\gL + \lambda_1 \nabla F +  \lambda_2 \theta=0,
\label{eq_fix_WD}
\end{equation}
\begin{equation}
 \frac{\lambda_2}{2} (\|\theta\|^2- C \eta^2)=0.
 \label{eq_KKT_slack}
\end{equation}

Case (i): $\lambda_2=0$. 
Taking the inner product of both sides of  Eq.~(\ref{eq_fix_WD}) with $\theta$, we have
\begin{equation}
\theta^\top (\gL + \lambda_1 \nabla F) = \lambda_1 (a \eta) = 0,
\end{equation}
because of $\theta^\top \gL(\theta)=0$ (eq. \ref{eq_thetaL}) from the scale invariance.
Therefore, $\lambda_1=0$ and we obtain $\frac{d\L}{d\eta}=0$.

Case (ii): $\lambda_2 \neq 0$.  We have $\|\theta\|^2- C \eta^2=0$, and thus,
\begin{equation}
\theta^\top \frac{\partial \theta}{\partial \eta}=C \eta.
\end{equation}
Then, 
\begin{align}
\theta^\top (\gL + \lambda_1 \nabla F +  \lambda_2 \theta) &= \lambda_1 a \eta + \lambda_2 \|\theta \|^2 \\
&= \lambda_1 a \eta + \lambda_2 C \eta^2 \\
&=0,
\end{align}
where we used eqs.(\ref{eq_thetaL},\ref{eq_thetaF}) in the first line and the KKT condition (\ref{eq_KKT_slack}) in the second line. 
We have  
\begin{equation}
\lambda_2 = - \frac{a}{C \eta} \lambda_1.
\end{equation}
Finally, 
\begin{align}
\frac{d\L}{d\eta} &=  - \lambda_1  - \lambda_2 \theta^\top \frac{\partial \theta}{\partial \eta} \\
&= -\lambda_1  + \frac{a}{C\eta} \lambda_1 \cdot C \eta  \\
&= (a-1) \lambda_1.
\end{align}
Taking the inner product of both sides of (\ref{eq_fix_WD}) with $\gL$, we have $\lambda_1 = - \|\gL\|^2/ \gL^\top \nabla F$.
\qed

\noindent
{\bf Remark on L2 regularization without the scale invariance.} In this case, the expression of $d\L/d\eta$ gets highly complicated. The constrained minimization gives
\begin{equation}
\frac{d \L}{d\eta} = - \lambda_1 - C \eta \lambda_2
\label{eq_wo_norm}
\end{equation}
with $\gL + \lambda_1 \nabla F + \lambda_2 \theta=0$. Taking the inner product of both sides of this equation with $\theta$ or $\gL$, we obtain the simultaneous equations for ($\lambda_1,\lambda_2$) and their solutions are
\begin{equation}
\lambda_1=\frac{\|\theta\|^2 \|\gL\|^2-(\theta^\top \gL)^2}{z}, \quad \lambda_2=\frac{(\theta^\top \gL) (\gL^\top \nabla F)-(\theta^\top \nabla F) \|\gL\|^2}{z},
\end{equation}
\begin{equation}
z= \theta^\top \gL  \theta^\top \nabla F - \|\theta\|^2 \gL^\top \nabla F.
\end{equation}
One can easily check that if we substitute $\theta^\top \gL=0$, which follows from  scale invariance, into this expression, we recover $d\L/d\eta=(1-a) {\|\gL\|^2}/{\gL^\top \nabla F}$.

\subsection{FP-LR with scale invariance}
\label{FPLR_SI}

Differentiating $\mathcal{L}(c\theta) = \mathcal{L}(\theta)$ with respect to $c$ at $c=1$ gives $\theta^\top\nabla\mathcal{L}(\theta)=0$. Differentiating this identity with respect to $\theta$ gives $\nabla\mathcal{L}+H\theta=0$, hence
\begin{equation}
 \quad H \theta = -\nabla \mathcal{L}.
 \label{eq_theta2}
\end{equation}
Suppose that we have the fixed point of the LR gradient:
\begin{equation}
\frac{\partial \theta}{\partial \eta} = (I - \eta H) \frac{\partial \theta}{\partial \eta} - \nabla \mathcal{L}.
\end{equation}
By multiplying ${\theta}$ by both sides, we have
\begin{equation}
\theta^\top \frac{\partial \theta}{\partial \eta}= {\theta}^\top \frac{\partial \theta}{\partial \eta} - \eta {\theta}^\top  H  \frac{\partial \theta}{\partial \eta} - {\theta}^\top \nabla \mathcal{L}.
\end{equation}
Because of ${\theta}^\top \nabla \mathcal{L} =0$ and 
\begin{equation}
-{\theta}^\top  H \frac{\partial \theta}{\partial \eta} =  \gL^\top  \frac{\partial \theta}{\partial \eta} = \frac{\partial \L}{\partial \eta}
\end{equation}
from eq.~(\ref{eq_theta2}), we obtain
\begin{equation}
\frac{\partial \L}{\partial \eta}=0.
\end{equation}

\section{Implementation}

\subsection{FP-LR Gradient}
\label{app:fp-lr-derivation}

\subsubsection{SGD} 
The FP-LR expression in the GD case has been shown in Eq.~(\ref{eq9:0925}). This expression is a heuristic that empirically explains the experimental results well, and its theoretical justification or derivation is beyond the scope of this work. However,
 in preparation for introducing the cases of other optimizers, let us explain here how the FP-LR expression can be interpreted or informally ``derived'' as a fixed point of the recurrence (\ref{eq:forward-mode-gd}). 

Let $\mathcal{L}$ include the L2 regularization term and write $H_t=\nabla^2\mathcal{L}(\theta_t)$.
For readability, we use $\partial_\eta$ to denote $\partial/\partial\eta$ in the appendix. 
We denote the average from step $t-\tau+1$ to the current step $t$ by
 $\langle f_t\rangle = \tau^{-1}\sum_{s=t-\tau+1}^{t}f_s$.
We average Eq.~(\ref{eq:forward-mode-gd}) and obtain
\begin{equation}
\langle\partial_\eta\theta_{t+1} \rangle-\langle \partial_\eta\theta_t \rangle
 =-\langle\nabla\mathcal{L}(\theta_t)\rangle
  -\eta\langle H_t\partial_\eta\theta_t\rangle.
\end{equation}
Under the assumption of $\langle\partial_\eta\theta_{t+1} \rangle=\langle \partial_\eta\theta_t \rangle$, rearranging then gives
\begin{equation}
 \eta\langle H_t\partial_\eta\theta_t\rangle
 =-\langle\nabla\mathcal{L}(\theta_t)\rangle.
 \label{eq:fp-gd-average-balance}
\end{equation}
Next, we impose the independence assumption $\langle H_t\partial_\eta\theta_t\rangle=\langle H_t\rangle\langle\partial_\eta\theta_t\rangle$ under time averaging.
Substituting this relation into Eq.~(\ref{eq:fp-gd-average-balance}) gives
\begin{equation}
 \eta\langle H_t\rangle\langle\partial_\eta\theta_t\rangle
 =-\langle\nabla\mathcal{L}(\theta_t)\rangle.
\end{equation}
Assuming that $\langle H_t\rangle$ is invertible, we solve for $\langle\partial_\eta\theta_t\rangle$:
\begin{equation}
 \langle\partial_\eta\theta_t\rangle
 =-\frac{1}{\eta}\langle H_t\rangle^{-1}\langle\nabla\mathcal{L}(\theta_t)\rangle.
\end{equation}
In practice, to reduce the cost of curvature computation, it will be rational to replace $\langle H_t\rangle$ by $\bar H_t=\nabla^2\mathcal{L}(\langle\theta_t\rangle)$ as an approximation, which is the Hessian evaluated at the averaged parameters. 

Finally, take the LR gradient of the evaluation loss at the averaged parameters as
\begin{equation}
 \begin{aligned}
 \frac{d\mathcal{L}_{\rm eval}(\langle\theta_t\rangle)}{d\eta}
 &=\nabla\mathcal{L}_{\rm eval}(\langle\theta_t\rangle)^\top\partial_\eta\langle\theta_t\rangle \\
 &=\nabla\mathcal{L}_{\rm eval}(\langle\theta_t\rangle)^\top
 \langle\partial_\eta\theta_t\rangle \\
 &=-\frac{1}{\eta}\nabla\mathcal{L}_{\rm eval}(\langle\theta_t\rangle)^\top
 \bar{H}_t^{-1}\langle\nabla\mathcal{L}(\theta_t)\rangle.
 \end{aligned}
\end{equation}
Thus, we can find the eventual FP-LR expression through some informal but rational operations.

\subsubsection{signGD}
In adaptive optimizers, we express $\mathcal{L}$ as the original loss and  make 
weight decay written explicitly. Consider
\begin{equation}
 \theta_{t+1}=\theta_t-\eta\{\phi(\nabla\mathcal{L}(\theta_t))+\gamma\theta_t\},
 \qquad
 \phi(x)=\frac{x}{|x|+\varepsilon},
\end{equation}
where $\varepsilon>0$, $\phi$ acts coordinatewise and $\gamma$ is held fixed when differentiating with respect to $\eta$. With $H_t=H(\theta_t)$, the exact sensitivity recurrence is
\begin{equation}
 \partial_\eta\theta_{t+1}=\{I-\eta(D_tH_t+\gamma I)\}\partial_\eta\theta_t
            -\{\phi(\nabla\mathcal{L}(\theta_t))+\gamma\theta_t\},
 \label{eq:signgd-recurrence}
\end{equation}
\begin{equation}
 (D_t)_{ij}:=\frac{\varepsilon}{(|\nabla_i\mathcal{L}(\theta_t)|+\varepsilon)^2}\delta_{ij}.
\end{equation}
\paragraph{Naive fixed point.} Freezing the coefficients and imposing a stationary sensitivity gives
\begin{equation}
 \partial_\eta\theta=-\frac{1}{\eta}(DH+\gamma I)^{-1}
                      \{\phi(\nabla\mathcal{L})+\gamma\theta\},
\end{equation}
provided $DH+\gamma I$ is invertible. Without weight decay, this reduces to $\partial_\eta\theta=-\eta^{-1}(DH)^{-1}\phi(\nabla\mathcal{L})$; both the update direction $\phi(\nabla\mathcal{L})$ and the order of the matrix inverse are essential. Under this stationary-sensitivity assumption, the corresponding loss gradient is
\begin{equation}
 \frac{d\mathcal{L}(\theta_t)}{d\eta}
 =-\frac{1}{\eta}\nabla\mathcal{L}(\theta_t)^\top(D_tH_t+\gamma I)^{-1}
                  \{\phi(\nabla\mathcal{L}(\theta_t))+\gamma\theta_t\}.
 \label{eq:sign-gd-micro-fixed}
\end{equation}
For exact signGD, the derivative of the sign function is zero away from zero and undefined at zero, so this inverse formula cannot be obtained by simply setting $\varepsilon=0$.

\paragraph{Time averaging.} Time averaging plays a role in avoiding the above subtlety of $\varepsilon=0$. 
Averaging Eq.~(\ref{eq:signgd-recurrence}) gives
\begin{equation}
 \langle \partial_\eta\theta_{t+1} \rangle -\langle \partial_\eta\theta_t \rangle
 =-\langle\phi(\nabla\mathcal{L}(\theta_t))\rangle-\gamma\langle\theta_t\rangle
  -\eta\langle D_tH_t\partial_\eta\theta_t\rangle
  -\eta\gamma\langle\partial_\eta\theta_t\rangle.
\end{equation}
We first assume that $\langle \partial_\eta\theta_{t+1} \rangle =\langle \partial_\eta\theta_t \rangle$. Rearranging gives
\begin{equation}
 \eta\bigl\{\langle D_tH_t\partial_\eta\theta_t\rangle
       +\gamma\langle\partial_\eta\theta_t\rangle\bigr\}
 =-\bigl\{\langle\phi(\nabla\mathcal{L}(\theta_t))\rangle
       +\gamma\langle\theta_t\rangle\bigr\}.
\end{equation}
Next, we impose the independence assumption $\langle D_tH_t\partial_\eta\theta_t\rangle=\langle D_t\rangle\langle H_t\rangle\langle\partial_\eta\theta_t\rangle$. Substituting this factorization gives
\begin{equation}
 \eta\bigl(\langle D_t\rangle\langle H_t\rangle+\gamma I\bigr)
 \langle\partial_\eta\theta_t\rangle
 =-\bigl\{\langle\phi(\nabla\mathcal{L}(\theta_t))\rangle
       +\gamma\langle\theta_t\rangle\bigr\}.
\end{equation}
As in SGD, we assume $\langle H_t\rangle=\bar H$ to reduce the cost of curvature computation. Substituting $\bar H$ and assuming $\langle D_t\rangle\bar H+\gamma I$ is invertible, we obtain
\begin{equation}
 \langle\partial_\eta\theta_t\rangle=-\frac{1}{\eta}
 (\langle D_t\rangle\bar H+\gamma I)^{-1}
 \bigl\{\langle\phi(\nabla\mathcal{L}(\theta_t))\rangle+\gamma\langle\theta_t\rangle\bigr\}.
\end{equation}
Finally, substituting this averaged sensitivity into the LR gradient at the averaged parameters defined above gives
\begin{equation}
\frac{d\mathcal{L}_{\rm eval}}{d \eta}
 =-\frac{1}{\eta}\nabla\mathcal{L}_{\rm eval}(\langle\theta_t\rangle)^\top
 (\langle D_t\rangle\bar H+\gamma I)^{-1}
 \bigl\{\langle\phi(\nabla\mathcal{L}(\theta_t))\rangle+\gamma\langle\theta_t\rangle\bigr\}.
\end{equation}
This is the FP-LR gradient for signGD which we used in the experiments. 
Note that taking $\phi(x)=x$ and $D_t=I$ recovers the GD case.

To approximate $\langle D_t\rangle$, we use a regularized regression slope for each coordinate over the final $\tau$ steps:
\begin{equation}
 \bar d_i:=
 \frac{\operatorname{Cov}_{s=t-\tau+1,\ldots,t}(\nabla_i\mathcal{L}(\theta_s),\phi(\nabla_i\mathcal{L}(\theta_s)))}
      {\operatorname{Var}_{s=t-\tau+1,\ldots,t}(\nabla_i\mathcal{L}(\theta_s))+\rho},
 \qquad \bar D:=\operatorname{diag}(\bar d_i),
\end{equation}
for $\rho>0$. 
For a Gaussian random variable $G_i$ with positive variance, Stein's identity gives
\begin{equation}
 \operatorname{Cov}(G_i,\phi(G_i))
 =\operatorname{Var}(G_i)\,\mathbb{E}[\phi'(G_i)].
\end{equation}
Consequently, if the gradient fluctuations in each coordinate are approximately Gaussian, the regularized population regression slope approaches the expected derivative $\mathbb{E}[\phi'(G_i)]$ as $\rho\to0$.
This expected derivative corresponds to the $i$-th diagonal entry of $\langle D_t\rangle$.
This motivates using the empirical slopes $\bar d_i$ to approximate $\langle D_t\rangle$ by $\bar D$ in the FP-LR gradient above.

\subsubsection{AdamW}
\label{app:fp-adamw}
Let $m_t=\beta_1m_{t-1}+(1-\beta_1)\nabla\mathcal{L}(\theta_t)$ and $v_t=\beta_2v_{t-1}+(1-\beta_2)[\nabla\mathcal{L}(\theta_t)]^{\odot2}$. Omitting bias correction, AdamW uses $u_t=m_t/(\sqrt{v_t}+\epsilon_{\rm A})$ and $\theta_{t+1}=(1-\eta\gamma)\theta_t-\eta u_t$, with coordinate-wise operations. As in SGD, $\langle\partial_\eta\theta_{t+1}\rangle=\langle\partial_\eta\theta_t\rangle$ gives
\begin{equation}
 \eta\bigl\{\langle\partial_\eta u_t\rangle
       +\gamma\langle\partial_\eta\theta_t\rangle\bigr\}
 =-\bigl(\langle u_t\rangle+\gamma\langle\theta_t\rangle\bigr).
\end{equation}
To express $\langle\partial_\eta u_t\rangle$ in terms of $\langle\partial_\eta\theta_t\rangle$, we fit a diagonal regression slope over the final $\tau$ steps:
\begin{equation}
 \bar a_i=\frac{\operatorname{Cov}_{s=t-\tau+1,\ldots,t}(m_{s,i},u_{s,i})}
 {\operatorname{Var}_{s=t-\tau+1,\ldots,t}(m_{s,i})+\rho},
 \qquad \bar A=\operatorname{diag}(\bar a_i),
\end{equation}
for $\rho>0$. We assume $\langle\partial_\eta u_t\rangle=\bar A\langle H_t\partial_\eta\theta_t\rangle$ as a heuristic replacement for differentiating the full moment dynamics. The same independence and curvature-replacement assumptions as in SGD then give $\langle\partial_\eta u_t\rangle=\bar A\bar H\langle\partial_\eta\theta_t\rangle$. Substituting this relation into the averaged equation and solving for $\langle\partial_\eta\theta_t\rangle$, we arrive at the FP-LR gradient of AdamW that we used: 
\begin{equation}
\frac{d\mathcal{L}_{\rm eval}}{d \eta}
 =-\frac{1}{\eta}\nabla\mathcal{L}_{\rm eval}(\langle\theta_t\rangle)^\top
 (\bar A\bar H+\gamma I)^{-1}
 \bigl(\langle u_t\rangle+\gamma\langle\theta_t\rangle\bigr).
 \label{eq:fp-adamw}
\end{equation}

\paragraph{Numerical computation.}
For direct hypergradient computation, we use forward mode because it propagates $\partial_\eta\theta_t$ alongside the weights without storing the training trajectory, so memory use does not grow with the number of training steps~\citep{franceschi2017forward}.
We compute inverse-curvature--vector products with conjugate gradients where we can compute without forming the curvature matrix. Unless otherwise stated, we use Gauss--Newton curvature.

\subsection{EoS-LR gradient}
\label{app:eos-lr-implementation}

We evaluate the EoS-LR gradient in Eq.~(\ref{eq_dL_naive}) by approximating the directional derivative of $F$ using a centered finite difference.
At the evaluation point $\theta$, let $u=\nabla\mathcal{L}(\theta)/\|\nabla\mathcal{L}(\theta)\|$ and choose a finite-difference step $h>0$.

For $F_{\mathrm{naive}}(\theta)=2/(\lambda_{\max}(H(\theta))+\gamma)$, we approximate the LR gradient as
\begin{equation}
{\frac{d\mathcal{L}}{d\eta}}=\frac{\|\nabla\mathcal{L}(\theta)\|}{D_h},
\end{equation}
with
\begin{equation}
 D_h:=\frac{F_{\mathrm{naive}}(\theta+hu)-F_{\mathrm{naive}}(\theta-hu)}{2h}.
\end{equation}
This evaluates the intermediate expression in Eq.~(\ref{eq_dL_naive}) without computing third derivatives. We also use this centered-difference approximation to compute the LR gradient based on $F_{\mathrm{batch}}$.

\paragraph{Momentum correction.}
Momentum changes the stability threshold that defines the EoS and therefore requires a corresponding correction to $F$.
In our CIFAR-10 experiments using AirBench, we used Nesterov SGD with momentum coefficient $\beta$. For this optimizer, we use
\begin{equation}
 F_{\mathrm{naive}}^{\mathrm{mom}}(\theta)=
 \frac{2(1+\beta)}{(1+2\beta)\bigl(\lambda_{\max}(H(\theta))+\gamma\bigr)}.
\end{equation}
Replacing $F_{\mathrm{naive}}$ with this expression scales the EoS-LR gradient by $(1+2\beta)/(1+\beta)$.

\subsection{Layerwise FP-LR Gradient with K-FAC}
\label{app:layer-wise-kfac}

For the layer-wise profiles in Figure~\ref{fig:gpt-qknorm-lrgrad-profile}, we use the empirical K-FAC matrix $B_l\otimes A_l$~\citep{martens2015optimizing} as a curvature approximation to the Hessian for layer $l$. Let $a_l\in\mathbb{R}^{d_{\mathrm{in},l}\times n_l}$ and $z_l\in\mathbb{R}^{d_{\mathrm{out},l}\times n_l}$ denote the layer inputs and pre-activation outputs, where $n_l$ is the minibatch size. Write $\delta_l=\partial\mathcal{L}_{\mathrm{tr}}/\partial z_l\in\mathbb{R}^{d_{\mathrm{out},l}\times n_l}$ for the backpropagated loss derivatives. We define $A_l=\langle a_la_l^{\top}/n_l\rangle$ and $B_l=\langle\delta_l\delta_l^{\top}/n_l\rangle$. We average the two factors, the training-loss gradients, and the weights separately over the final training epoch.

For weight-decay coefficient $\gamma$, we compute the FP-LR contribution of layer $l$ as
\begin{equation}
 c_l=-\frac{1}{\eta}(\nabla_{\theta_l}\mathcal{L}_{\mathrm{val}})^{\top}
 \bigl[B_l\otimes A_l+\gamma I\bigr]^{-1}
 \bigl(\langle\nabla_{\theta_l}\mathcal{L}_{\mathrm{tr}}\rangle+\gamma\bar\theta_l\bigr).
\end{equation}
Here, $\nabla_{\theta_l}\mathcal{L}_{\mathrm{val}}$ is the validation-loss gradient for layer $l$ and $\eta$ is the LR. We evaluate linear layers, excluding normalization parameters. We evaluate the inverse-matrix product using eigendecompositions of $A_l$ and $B_l$, without forming their Kronecker product.

\section{Additional Experiments}
\label{app:additional-experiments}

\subsection{EoS-LR Estimates and Gradient Alignment}
\label{sec_C1}

\paragraph{Cross-entropy loss.}
Figure~\ref{fig:cnn-ce-lrgrads} compares EoS-LR and time-averaged FP-LR with finite differences for CNNs trained on CIFAR-10 with cross-entropy loss. Both estimates agree with finite differences over much of the LR range where the loss decreases.

\begin{figure}[!htbp]
    \centering
    \includegraphics[width=1\linewidth]{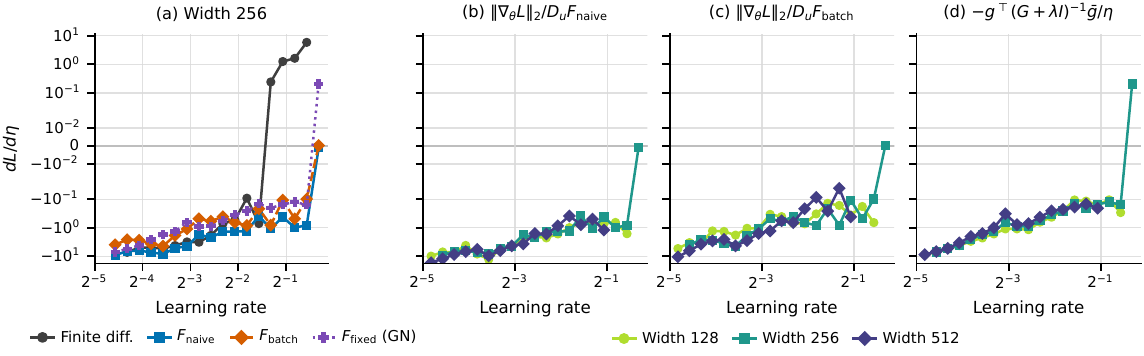}
    \caption{\textbf{LR-gradient estimates for cross-entropy training.} CNNs on CIFAR-10 with widths $128$, $256$, and $512$. (a) At width $256$, we compare finite differences, EoS-LR based on $F_{\mathrm{naive}}$ and $F_{\mathrm{batch}}$, and time-averaged FP-LR using Gauss--Newton curvature. (b--d) The three estimators across widths. The plots show both the sign and magnitude of the LR gradient.}
    \label{fig:cnn-ce-lrgrads}
\end{figure}

\paragraph{ResNet on CIFAR-10.}
EoS-LR also broadly captures the magnitude and LR dependence of the LR gradient for ResNet on CIFAR-10 (Figure~\ref{fig:resnet-eos-lrgrad}).
The remaining discrepancies near zero at larger LRs vary across seeds, suggesting seed-dependent fluctuations rather than a systematic error in the estimated LR dependence.

The largest Hessian eigenvalue lies above the EoS threshold $2/\eta$ at smaller LRs and below it at larger LRs. This observation motivates relaxing the equality constraint in Eq.~(\ref{eq_ori_opt}) to an inequality. Consider first $F(\theta)\leq\eta$, equivalently $\lambda_{\max}\geq2/\eta$. For the Lagrangian $\mathcal{L}(\theta)+\lambda(F(\theta)-\eta)$, the KKT conditions require $\lambda\geq0$, and the local sensitivity relation gives $d\mathcal{L}_*/d\eta=-\lambda\leq0$, where $\mathcal{L}_*(\eta)$ denotes the optimal value, assumed differentiable. Reversing the constraint to $F(\theta)\geq\eta$, equivalently $\lambda_{\max}\leq2/\eta$, yields $\lambda\leq0$ under the same Lagrangian convention and hence $d\mathcal{L}_*/d\eta\geq0$. In either case, complementary slackness implies $\lambda=0$ when the constraint is inactive. A strictly positive or negative LR gradient therefore requires a nonzero multiplier. The two inequality formulations are thus consistent with the decreasing and increasing branches of a U-shaped LR transfer curve, respectively. This interpretation suggests a possible link between the direction of the sharpness constraint relative to the EoS threshold and the shape of the LR transfer curve.

\begin{figure}[!htbp]
    \centering
    \includegraphics[width=\linewidth]{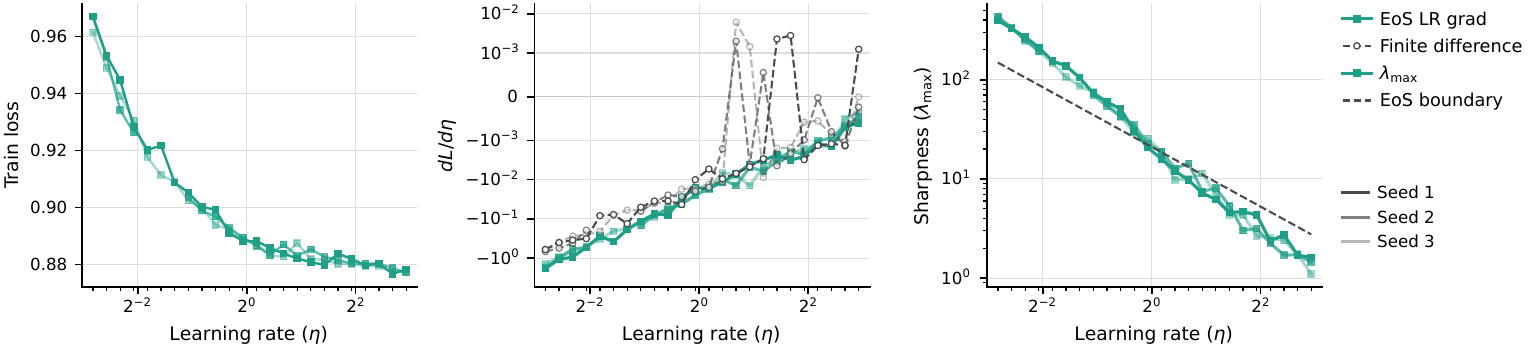}
    \caption{\textbf{EoS-LR estimates for ResNet on CIFAR-10.} Left: training loss versus LR. Middle: EoS-LR estimates and finite-difference LR gradients. Right: the largest Hessian eigenvalue and the EoS boundary $2(1+\beta)/\eta(1+2\beta)$. Each curve corresponds to one of three seeds.}
    \label{fig:resnet-eos-lrgrad}
\end{figure}

\paragraph{Alignment at averaged parameters.}
At time-averaged parameters, the cosine similarity between $\nabla\mathcal L$ and $\nabla\lambda_{\max}$ stabilizes at a positive value (Figure~\ref{fig:alignment-resnet}). As discussed in the main text, this gives a negative EoS-LR gradient in Eq.~(\ref{eq_dL_naive}), consistent with loss decreasing as the LR increases.
When evaluated at the parameters from each training step without averaging, the cosine similarity fluctuates more and can become negative. 
These negative values show that the sign can differ between individual training steps and averaged parameters. This supports the distinction made in the main text between alignment along the optimization trajectory and on the flat manifold.

\begin{figure}[!htbp]
    \centering
    \includegraphics[width=0.8\linewidth]{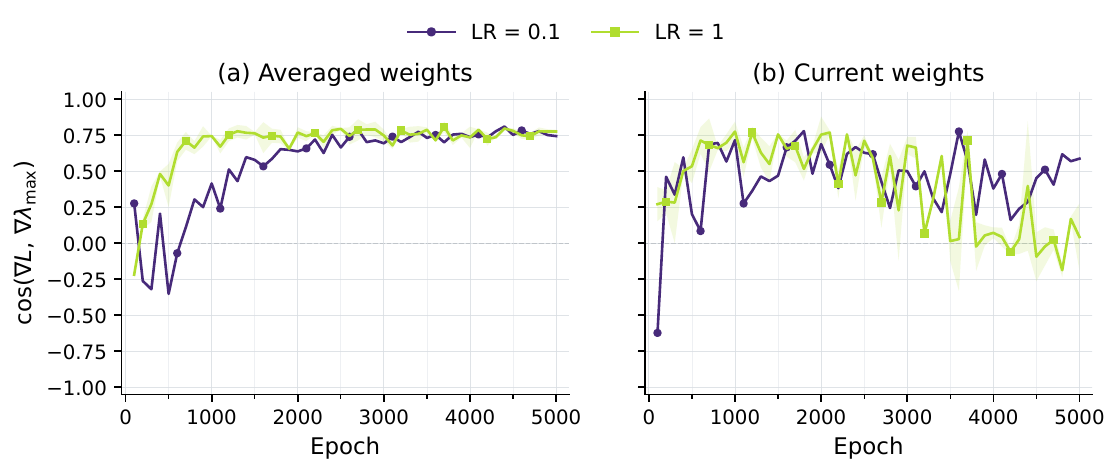}
    \caption{\textbf{Parameter averaging reduces fluctuations in sharpness-gradient alignment.} ResNet on CIFAR-10 at width $64$, trained for $5{,}000$ epochs with LRs $0.1$ and $1$. We plot $\cos(\nabla\mathcal L,\nabla\lambda_{\max})$ at (a) averaged parameters and (b) current parameters. Curves and shaded regions show seed means and standard errors.}
    \label{fig:alignment-resnet}
\end{figure}

\paragraph{LR exponent of the sharpness gradient.}
The sharpness-gradient norm $\|\nabla_\theta\lambda_{\max}\|$ scales with an LR exponent close to $-1$ at all three widths in Figure~\ref{fig:lambda_grad_lr_exponents}. This supports the empirical scaling used in the EoS-LR bound in Section~\ref{sec5_bounds}.

\begin{figure}[!htbp]
    \centering
    \includegraphics[width=0.9\linewidth]{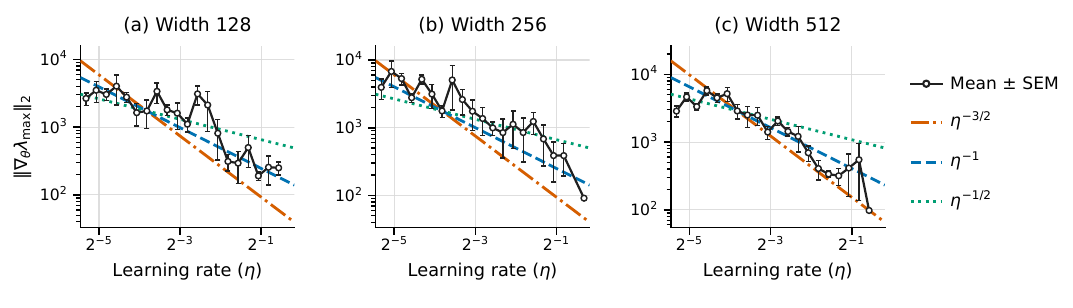}
    \caption{\textbf{LR scaling of the sharpness-gradient norm.} CNNs on CIFAR-10 at widths $128$, $256$, and $512$. We estimate $\|\nabla_\theta\lambda_{\max}\|$ from finite differences of $F_{\mathrm{naive}}$. Reference lines show exponents $-3/2$, $-1$, and $-1/2$; the observed scaling is close to $-1$. Points and error bars show seed means and standard errors.}
    \label{fig:lambda_grad_lr_exponents}
\end{figure}

\paragraph{Batch sharpness at averaged parameters.}
At exponentially averaged parameters, batch sharpness remains below $2/\eta$, whereas the largest Hessian eigenvalue stays close to $2/\eta$ (Figure~\ref{fig:eta-vs-sharpness}). At the final, unaveraged parameters, batch sharpness is closer to $2/\eta$, particularly after longer training. Thus, the two sharpness measures behave differently under parameter averaging in this experiment. We therefore used the final parameters without averaging to compute the EoS-LR gradient with $F_{batch}$.

\begin{figure}[htbp]
    \centering
    \includegraphics[width=\linewidth]{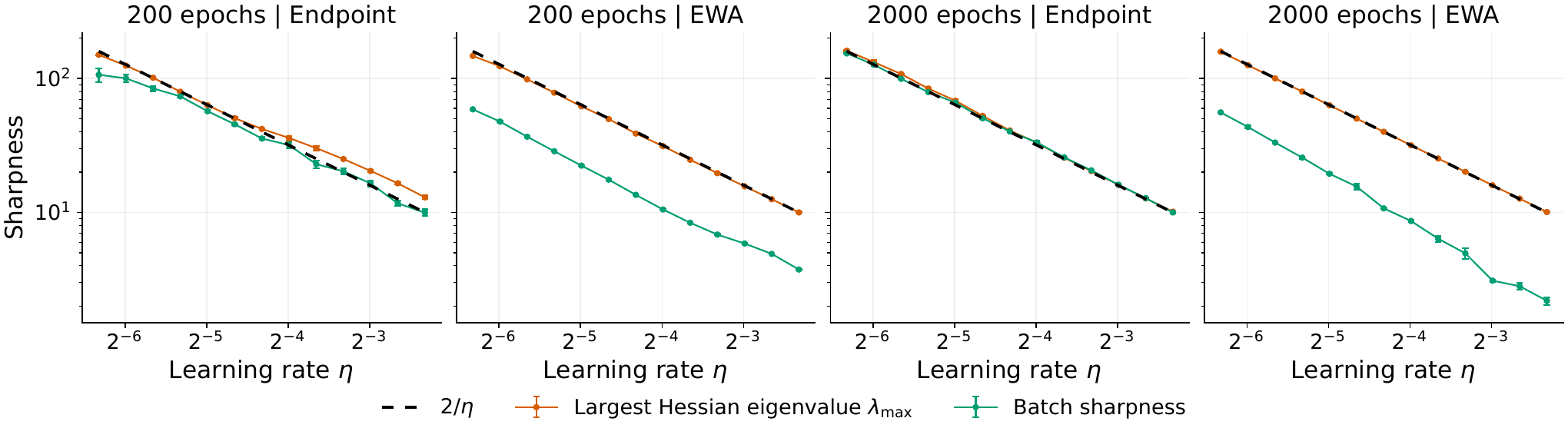}
    \caption{\textbf{Batch sharpness falls below the EoS threshold at averaged parameters.} Sharpness versus LR on Fashion-MNIST using an MLP. The first two panels show results after $200$ epochs, and the last two after $2000$ epochs. Within each pair, the left panel uses the final parameters and the right panel uses exponentially averaged parameters (EWA). We compare the largest Hessian eigenvalue and batch sharpness with $2/\eta$ (dashed).}
    \label{fig:eta-vs-sharpness}
\end{figure}

\paragraph{Longer training under full-batch GD.}
In a full-batch setting, longer training improves agreement between EoS-LR and finite-difference gradients in the small MLP and CNN experiments (Figure~\ref{fig:small-networks-iteration-agreement}).
Both models use full-batch GD on $512$ examples from two classes of Fashion-MNIST, with parameters averaged over the last $512$ updates.
For the MLP, agreement extends toward lower LRs as training proceeds; for the CNN, the large differences at $8192$ iterations are reduced by $32768$ iterations.

This dependence on training iterations is consistent with EoS-LR assuming a constrained minimum in Eq.~(\ref{eq_ori_opt}), where $\nabla\mathcal{L}+\lambda\nabla F=0$.
Figure~\ref{fig:small-networks-iteration-cosine} shows that $\cos(\nabla\mathcal{L},\nabla\lambda_{\max})\approx-1$ at larger LRs, as expected from this condition when the EoS-LR gradient is positive.
This range broadens with training for the MLP; the CNN already exhibits this alignment at many LRs after $8192$ iterations, despite remaining differences between the LR-gradient estimates.

\begin{figure}[htbp]
    \centering
    \includegraphics[width=\linewidth]{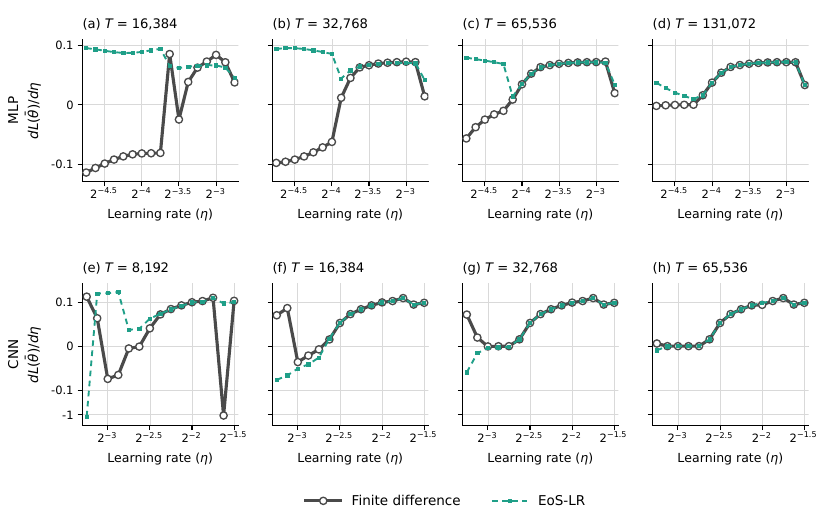}
    \caption{\textbf{LR-gradient agreement with longer full-batch GD training.} Finite differences (gray) and EoS-LR (teal) for the MLP (top) and CNN (bottom) trained for increasing numbers of iterations $T$ on $512$ examples from two classes of Fashion-MNIST. The averaging window remains $512$ updates.}
    \label{fig:small-networks-iteration-agreement}
\end{figure}

\begin{figure}[htbp]
    \centering
    \includegraphics[width=\linewidth]{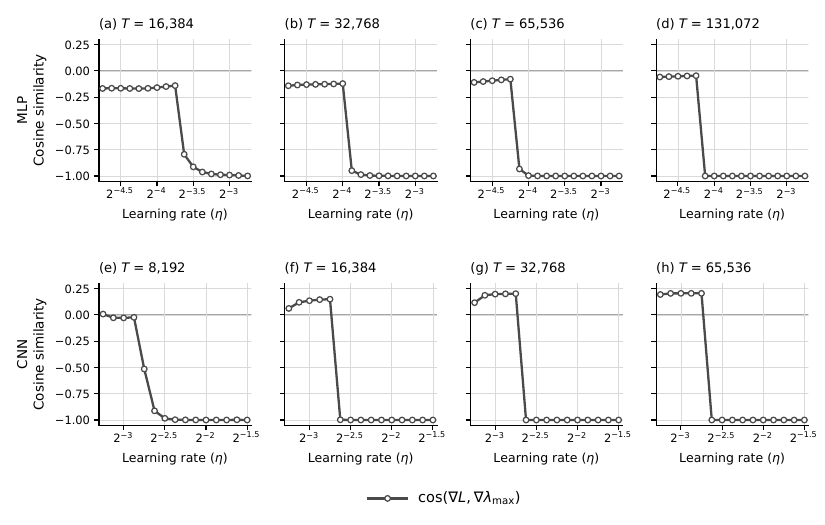}
    \caption{\textbf{Gradient alignment with longer training.} Cosine similarity $\cos(\nabla\mathcal{L},\nabla\lambda_{\max})$ at parameters averaged over the last $512$ updates, for the same MLP (top) and CNN (bottom) as in Figure~\ref{fig:small-networks-iteration-agreement}.}
    \label{fig:small-networks-iteration-cosine}
\end{figure}

\subsection{FP-LR Estimates across Batch Sizes and Architectures}
\label{app-sec:lr-grad-arch}

\paragraph{Alignment of averaged gradients.}
Gradients at epoch-averaged parameters remain closely aligned with data augmentation and batch size $2{,}048$ (Figure~\ref{fig:alignment}). Without augmentation and at batch size $128$, their cosine similarity falls toward zero as the LR increases. These results support slowly changing gradient directions in the former setting. They do not test changes in gradient magnitude, the independence assumption in Appendix~\ref{app:fp-lr-derivation}, or the assumption $\langle\partial_\eta\theta_{t+1}\rangle=\langle\partial_\eta\theta_t\rangle$.

\begin{figure}[!htbp]
    \centering
    \begin{minipage}[c]{0.44\linewidth}
        \includegraphics[width=\linewidth]{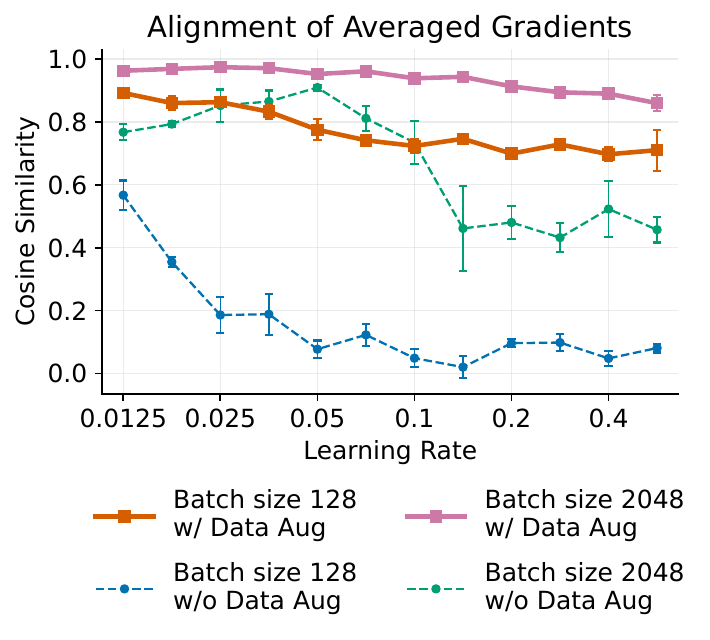}
    \end{minipage}\hfill
    \begin{minipage}[c]{0.5\linewidth}
        \caption{\textbf{Alignment of gradients at successive averaged parameters.} A width-$256$ CNN trained on CIFAR-10 with mean-squared loss for $200$ epochs. We compare batch sizes $128$ and $2{,}048$, with and without augmentation. We plot the cosine similarity of full-data gradients evaluated at successive epoch-averaged parameters near the end of training. Error bars show standard errors across seeds.}
        \label{fig:alignment}
    \end{minipage}
\end{figure}

\paragraph{CNN batch size.}
Time-averaged FP-LR captures the overall decrease in LR-gradient magnitude as the LR increases, across the tested batch sizes (Figure~\ref{fig:app-cnn-bs}). ``Naive Fixed Point'' denotes the fixed-point estimate in Section~\ref{sec4} computed without time averaging. We use either the Hessian or the Gauss--Newton matrix, as indicated in each plot. Using the exact Hessian does not consistently improve agreement with finite differences over the Gauss--Newton approximation. All models are trained for the same number of epochs, so smaller batches result in more parameter updates.

\begin{figure}[!htbp]
    \centering
    \includegraphics[width=1\linewidth]{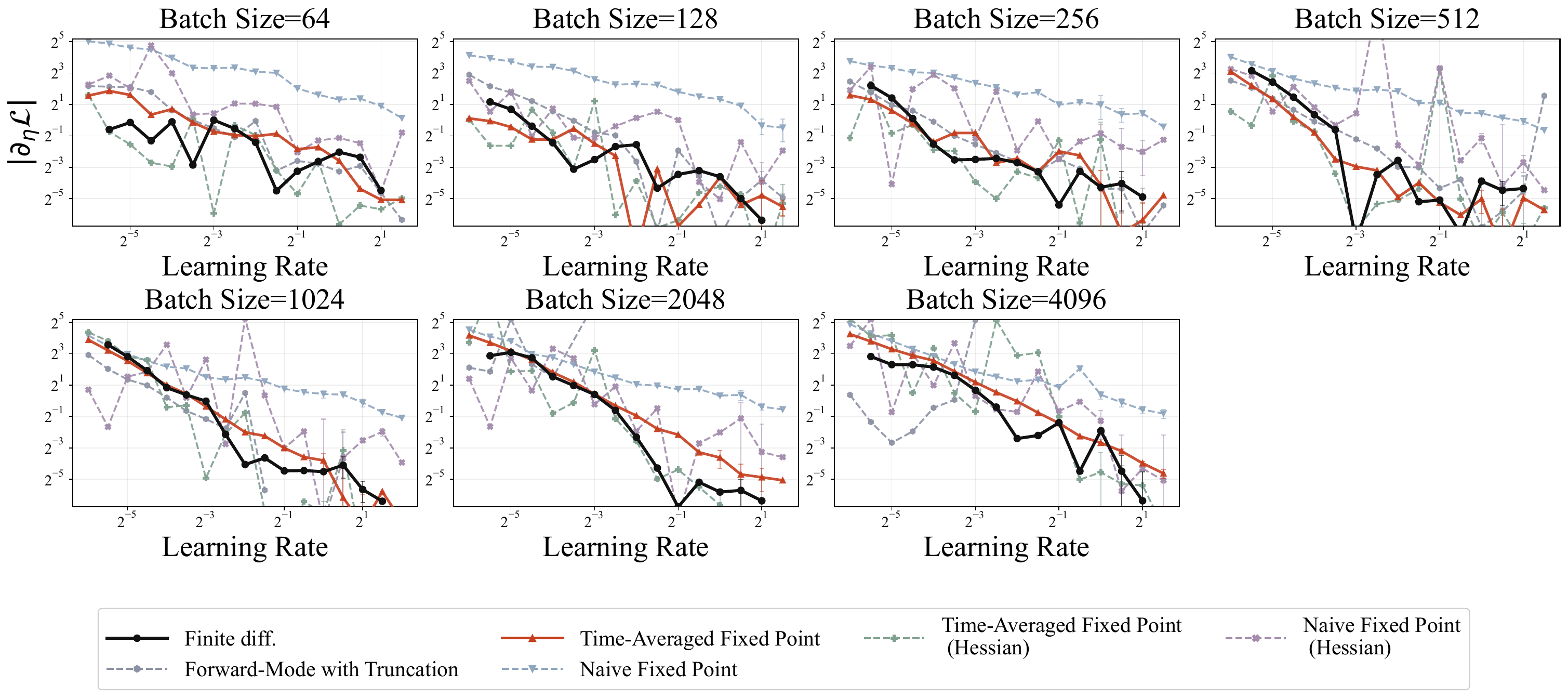}
    \caption{\textbf{FP-LR estimates across batch sizes.} A width-$32$ CNN is trained on CIFAR-10 for $200$ epochs, with batch sizes from $64$ to $4{,}096$. Each panel compares the magnitude of the test-loss LR gradient from finite differences with truncated forward-mode, naive fixed-point, and time-averaged FP-LR estimates. ``Hessian'' distinguishes exact-Hessian estimates from those using Gauss--Newton curvature.}
    \label{fig:app-cnn-bs}
\end{figure}

\paragraph{Fashion-MNIST and ResNet8.}
On Fashion-MNIST, time-averaged FP-LR agrees more closely with finite differences than the naive fixed-point estimate (Figure~\ref{fig:app-gd-fashion}). These plots compare gradient magnitudes and do not establish agreement in sign. For ResNet8, time-averaged FP-LR with the exact Hessian also improves on the naive fixed-point estimate over much of the LR range (Figure~\ref{fig:app-resnet8-lr-macro}). Removing batch normalization increases LR sensitivity.

\begin{figure}[!htbp]
    \centering
    \includegraphics[width=1\linewidth]{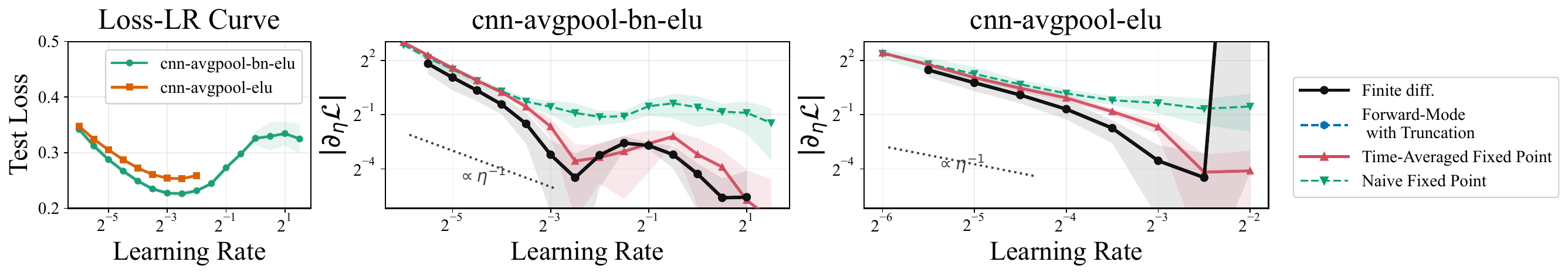}
    \caption{\textbf{LR-gradient estimates on Fashion-MNIST.} Width-$32$ CNNs with and without batch normalization are trained with batch size $1{,}024$. Left: test loss versus LR. Middle and right: LR-gradient magnitudes for the models with and without batch normalization, respectively.}
    \label{fig:app-gd-fashion}
\end{figure}

\begin{figure}[!htbp]
    \centering
    \includegraphics[width=1\linewidth]{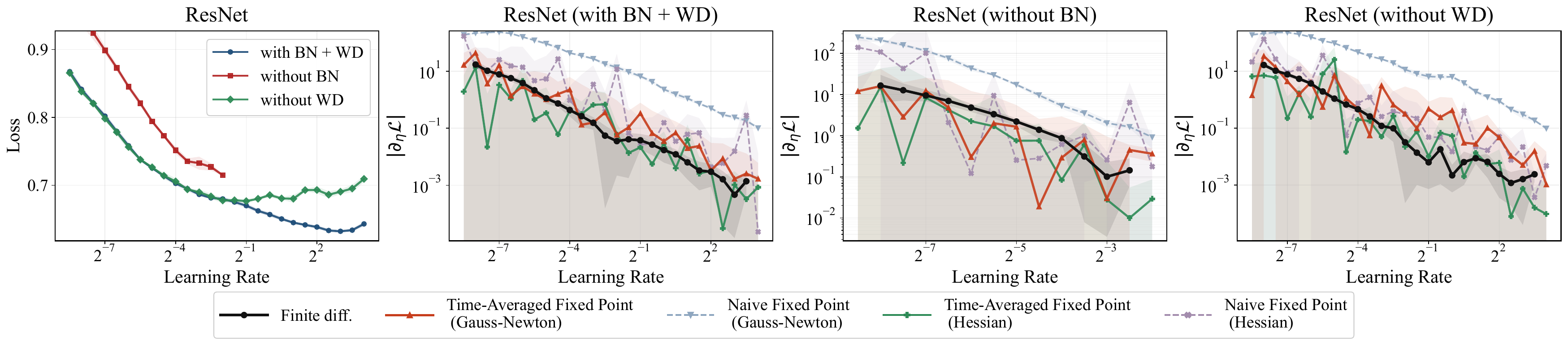}
    \caption{\textbf{Batch normalization reduces ResNet8 LR sensitivity.} ResNet8 is trained on CIFAR-10 for $100$ epochs with SGD, width $8$, and batch size $256$. Left: test-loss curves with BN and weight decay, without BN, and without weight decay. Remaining panels: corresponding LR-gradient magnitudes, using the exact Hessian or the Gauss--Newton approximation.}
    \label{fig:app-resnet8-lr-macro}
\end{figure}

\paragraph{Fixed update budget.}
FP-LR accuracy still varies with batch size when the number of parameter updates is fixed (Figure~\ref{fig:batch_size_mlp_fixed_iters}). For this MLP, the time-averaged Gauss--Newton estimate closely follows finite differences at batch size $4{,}096$, but often overestimates their magnitude at batch size $1{,}024$ and below. The naive Gauss--Newton estimate overestimates sensitivity more strongly. Time averaging therefore improves agreement but does not remove the discrepancies at smaller batch sizes.

\begin{figure}[!htbp]
    \centering
    \includegraphics[width=1\linewidth]{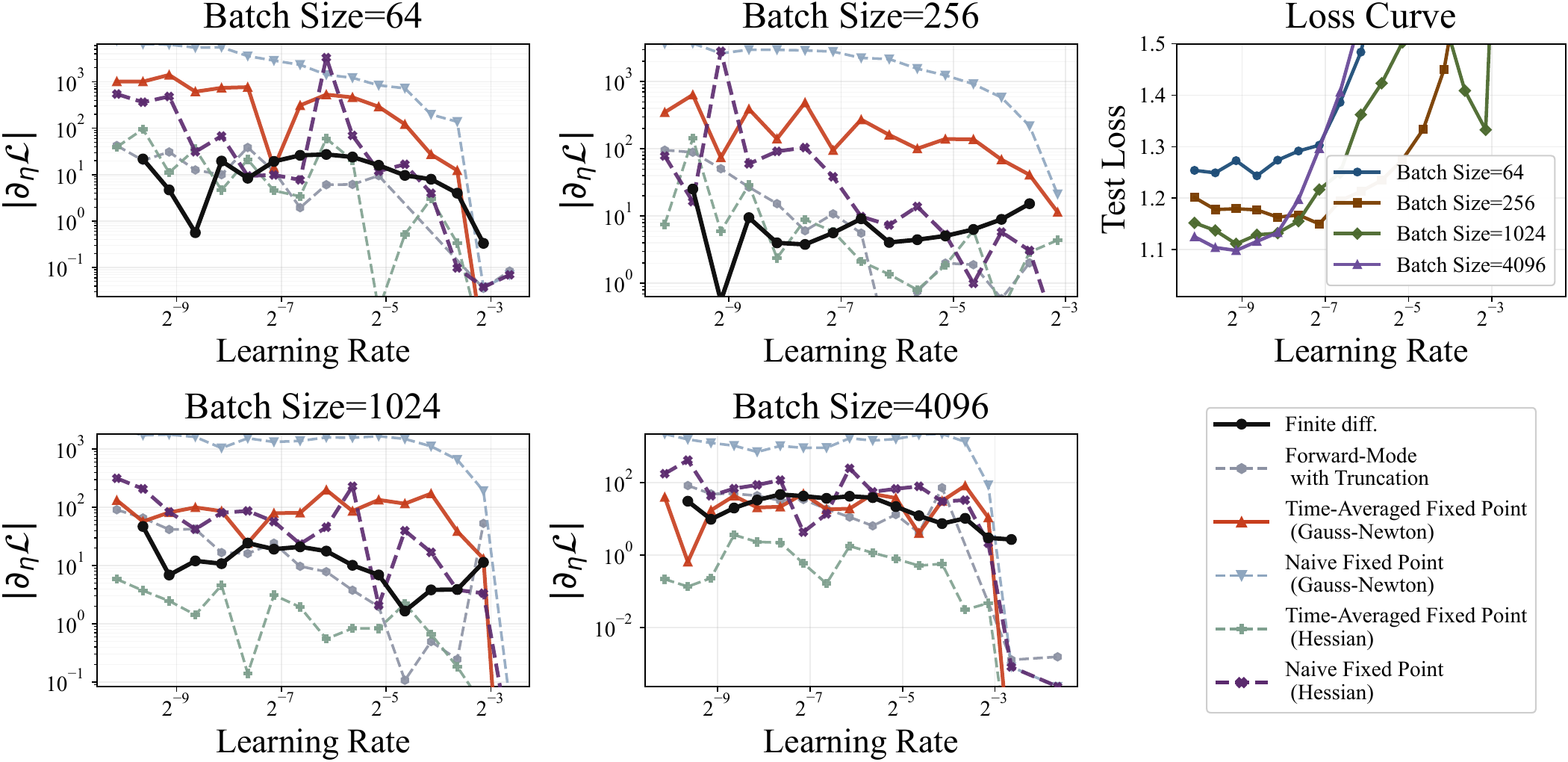}
    \caption{\textbf{MLP LR-gradient estimates with a fixed update budget.} A width-$1{,}024$ MLP with ELU activations is trained on CIFAR-10 for $20{,}000$ updates. We compare batch sizes $64$, $256$, $1{,}024$, and $4{,}096$. Four panels show the magnitude of the test-loss LR gradient; the upper-right panel shows test loss.}
    \label{fig:batch_size_mlp_fixed_iters}
\end{figure}

\subsection{FP-LR Estimates for signGD and AdamW}
\label{app:fp-other-optimizers}

\paragraph{signGD.}
For ResNet8 trained with signGD, time averaging reduces the overestimation of LR-gradient amplitudes by the naive fixed-point estimate (Figure~\ref{fig:app-resnet8-lr-sign}). The time-averaged estimate still differs from finite differences, particularly without batch normalization.

\begin{figure}[!htbp]
    \centering
    \includegraphics[width=1\linewidth]{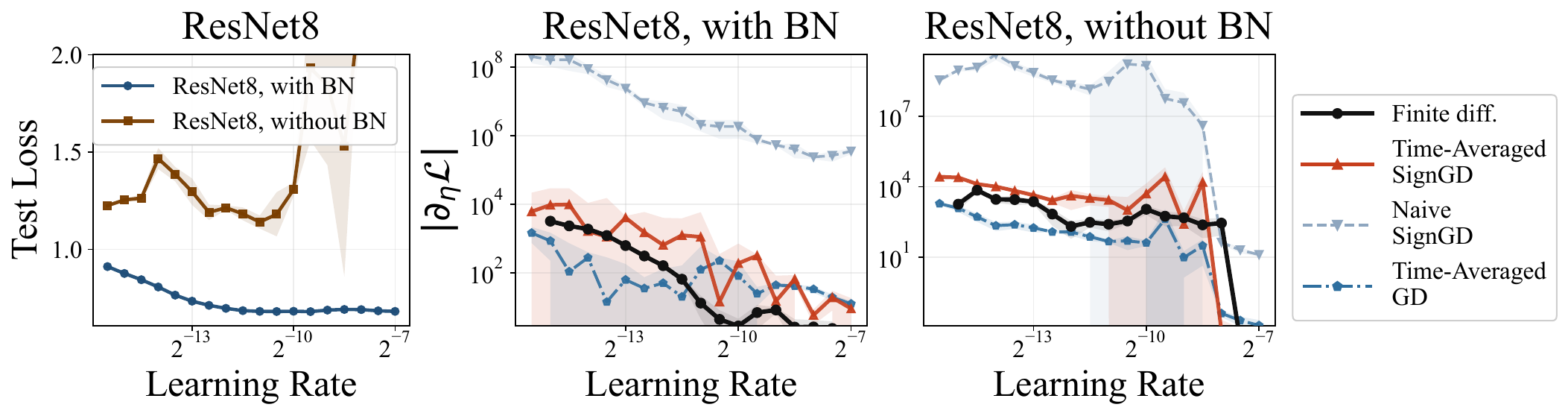}
    \caption{\textbf{Time averaging reduces overestimation by the naive signGD estimator.} ResNet8 on CIFAR-10 with width $8$ and batch size $256$, with and without batch normalization. Left: test loss. Middle and right: LR-gradient magnitudes, comparing finite differences, time-averaged signGD, naive signGD, and the time-averaged GD approximation.}
    \label{fig:app-resnet8-lr-sign}
\end{figure}

\paragraph{AdamW.}
For the CNN trained with AdamW, the time-averaged Adam approximation largely follows the finite-difference (Figure~\ref{fig:app-adamw-cnn}). Agreement is poorer near the minimum, where the loss curve is nearly flat. For ResNet8, using only the output-layer contribution improves agreement over much of the LR range (Figure~\ref{fig:app-adamw-res}). This motivates the layer-wise analysis in Section~\ref{sec5_layer}, although it does not show that contributions from other layers are negligible in general.

\begin{figure}[!htbp]
    \centering
    \includegraphics[width=0.7\linewidth]{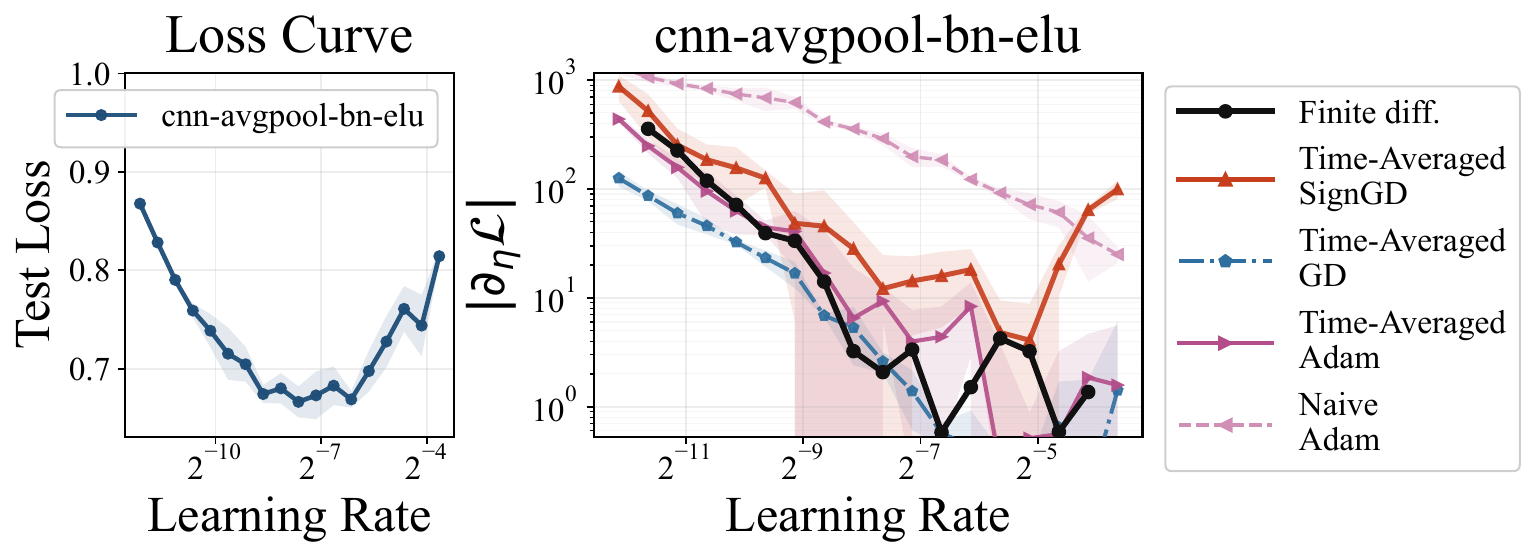}
    \caption{\textbf{Time-averaged estimates for CNN training with AdamW.} A width-$32$ CNN with batch normalization is trained on CIFAR-10 with batch size $1{,}024$. Left: test loss. Right: LR-gradient magnitudes from finite differences and the GD, signGD, and Adam approximations. The naive Adam estimate is included for comparison.}
    \label{fig:app-adamw-cnn}
\end{figure}

\begin{figure}[!htbp]
    \centering
    \includegraphics[width=0.95\linewidth]{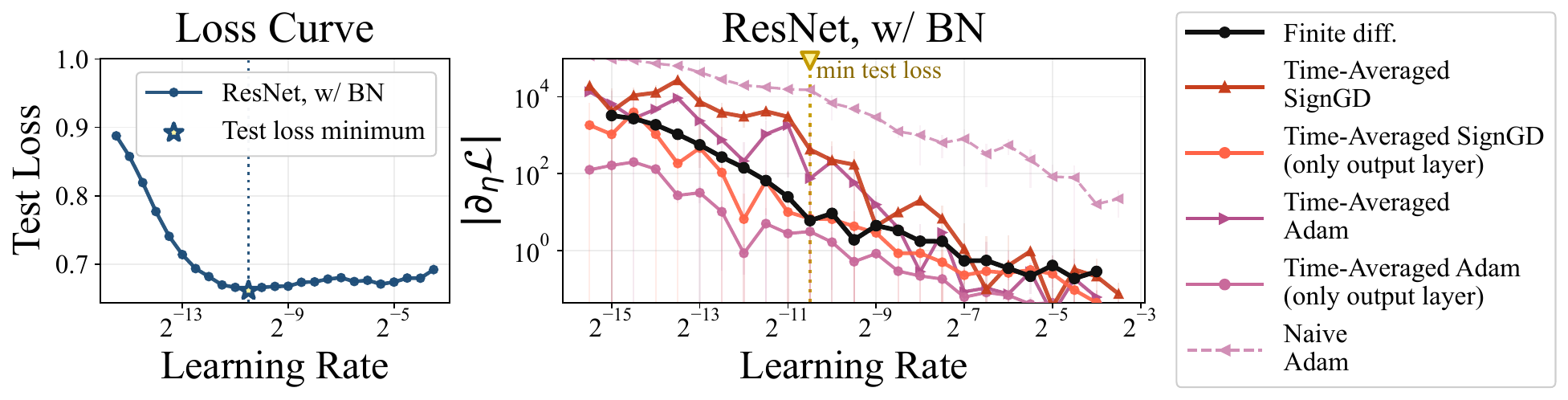}
    \caption{\textbf{Output-layer estimates more closely follow finite differences for ResNet8.} ResNet8 with batch normalization is trained on CIFAR-10 using AdamW, width $8$, and batch size $256$. Left: test loss. Right: LR-gradient magnitudes, including estimates restricted to the output layer. The vertical line marks the LR minimizing test loss.}
    \label{fig:app-adamw-res}
\end{figure}

\subsection{Normalization, Weight Decay, and LR Transfer}

\paragraph{Normalization and LR settings in GPT.}
QK normalization broadens the range of LRs that give low validation loss in GPT models trained with the same LR across layers (Figure~\ref{fig:app-constant-lr-loss}). Without weight decay, layer-wise LRs generally give lower local LR sensitivity than a common LR (Figure~\ref{fig:constant-lr-gpt}). Adding weight decay to the common-LR setting also reduces sensitivity in most configurations. Sensitivity does not consistently increase or decrease with model width.

\begin{figure}[!htbp]
    \centering
    \includegraphics[width=1\linewidth]{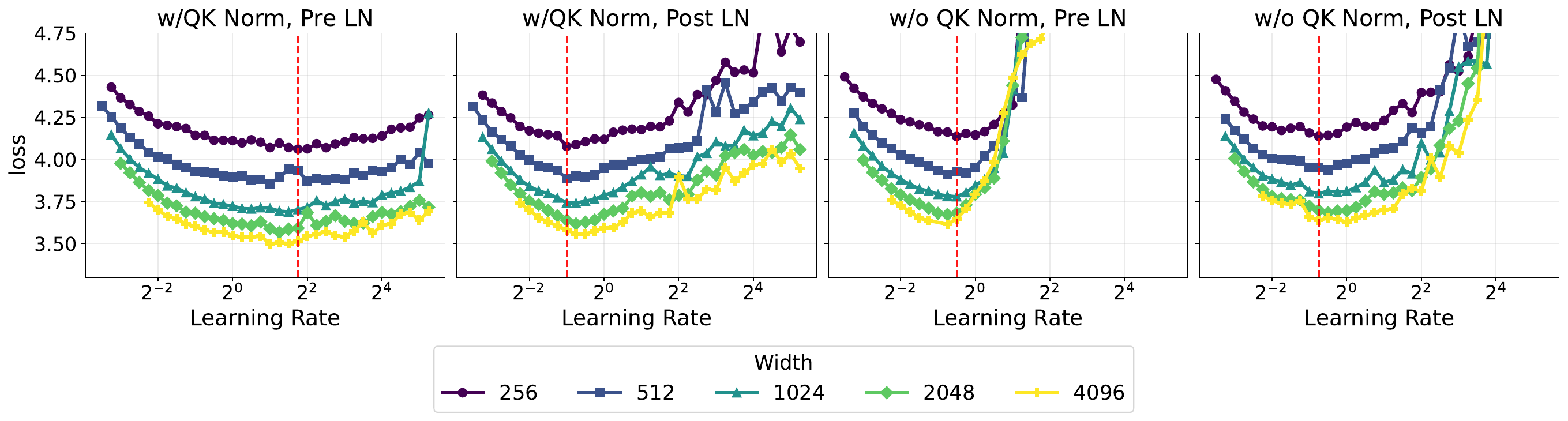}
    \caption{\textbf{QK normalization broadens the low-loss LR range with a common LR across layers.} GPT models trained on FineWeb with AdamW and weight decay. Panels compare Pre-LN and Post-LN, with and without QK normalization; colors indicate model width.}
    \label{fig:app-constant-lr-loss}
\end{figure}

\begin{figure}[!htbp]
    \centering
    \includegraphics[width=1\linewidth]{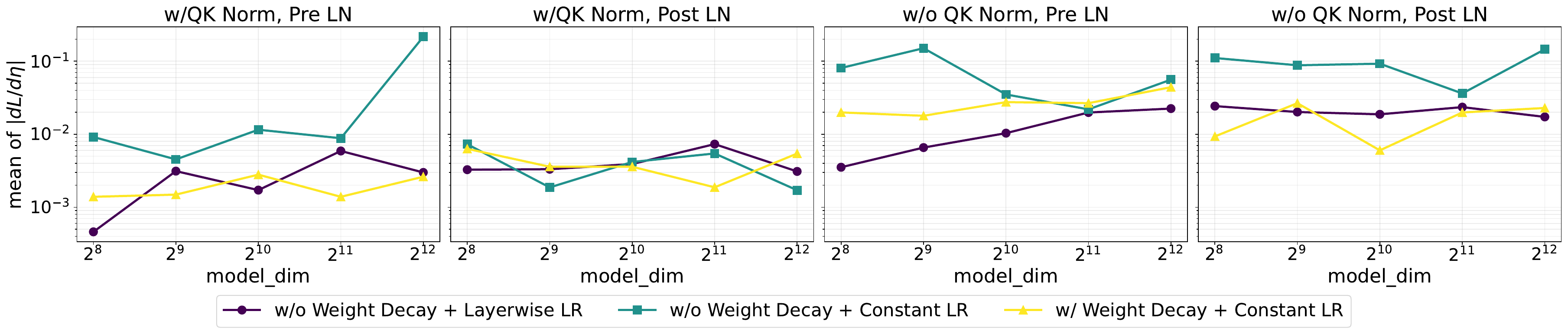}
    \caption{\textbf{Layerwise LRs and weight decay reduce local LR sensitivity in most tested GPT configurations.} Columns compare Pre-LN and Post-LN, with and without QK normalization. The three curves use layer-wise LRs without weight decay, a common LR without weight decay, or a common LR with weight decay. At each model width, sensitivity is the mean of the five smallest nonzero absolute finite-difference estimates of the validation-loss LR gradient across the LR sweep.}
    \label{fig:constant-lr-gpt}
\end{figure}

\paragraph{Batch normalization and weight decay in ResNet.}
Combining batch normalization with LR-independent weight decay gives a broader range of LRs with high test accuracy than using either alone (Figure~\ref{fig:adam-resnet-c10}). At width $32$, the magnitude of the accuracy derivative is also smaller over much of the LR range. Here, sensitivity refers to the derivative of accuracy with respect to LR; the other comparisons use loss derivatives.

\begin{figure}[!htbp]
    \centering
    \includegraphics[width=1\linewidth]{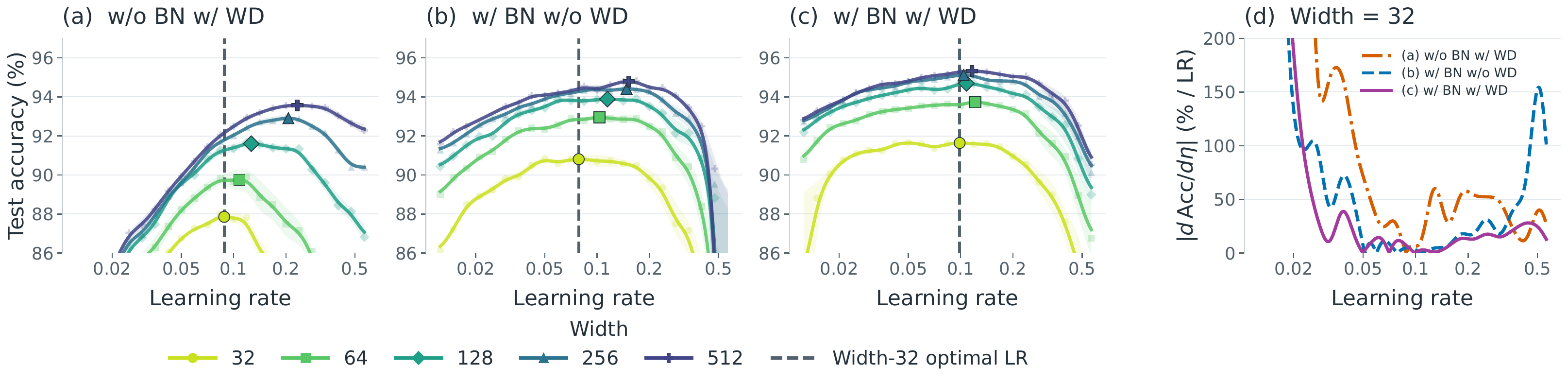}
    \caption{\textbf{Batch normalization and weight decay broaden the high-accuracy LR range.} ResNet on CIFAR-10 trained with AdamW and LR-independent weight decay. (a--c) Test accuracy without BN, without weight decay, and with both, across widths $32$--$512$. Curves are smoothed in log-LR space; faint points show seed means, shaded bands show standard errors, and dashed lines mark the width-$32$ optimum. (d) At width $32$, the absolute finite-difference derivative of smoothed accuracy, measured in percentage points per unit LR.}
    \label{fig:adam-resnet-c10}
\end{figure}

\paragraph{Supplementary results with $1/d$ attention scaling.}
Pre-LN gives lower local LR sensitivity than Post-LN in the GPT experiment with $1/d$ attention scaling, where $d$ is the attention head dimension (Figure~\ref{fig:app-mup-attn}). The difference in local LR sensitivity between models with and without QK normalization is smaller than that between Pre-LN and Post-LN. Validation loss is higher with QK normalization in these runs.

\begin{figure}[!htbp]
    \centering
    \includegraphics[width=1\linewidth]{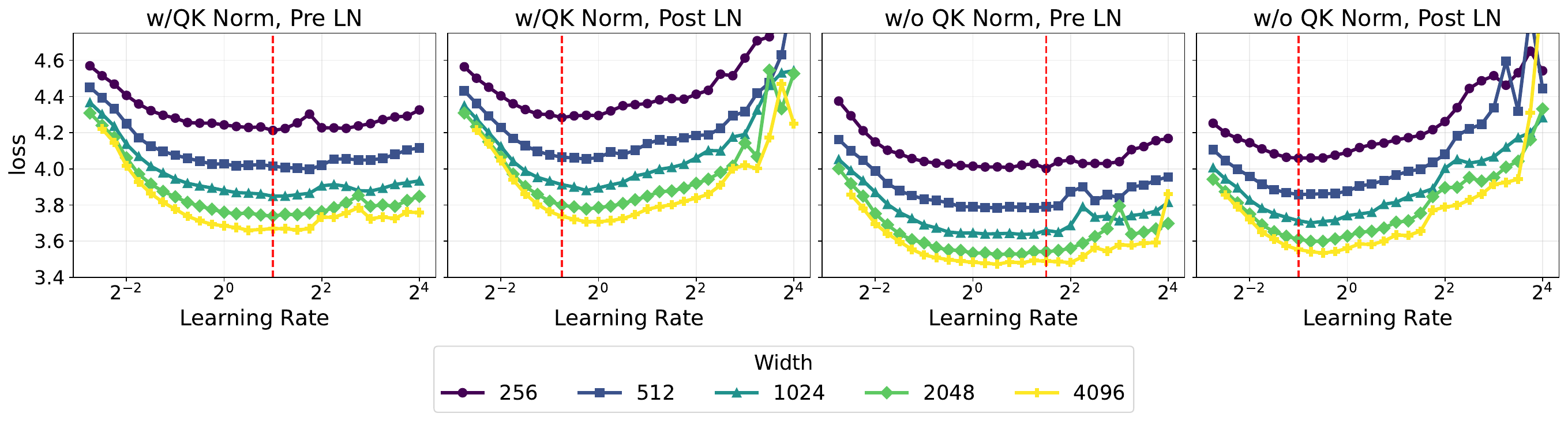}
    \includegraphics[width=1\linewidth]{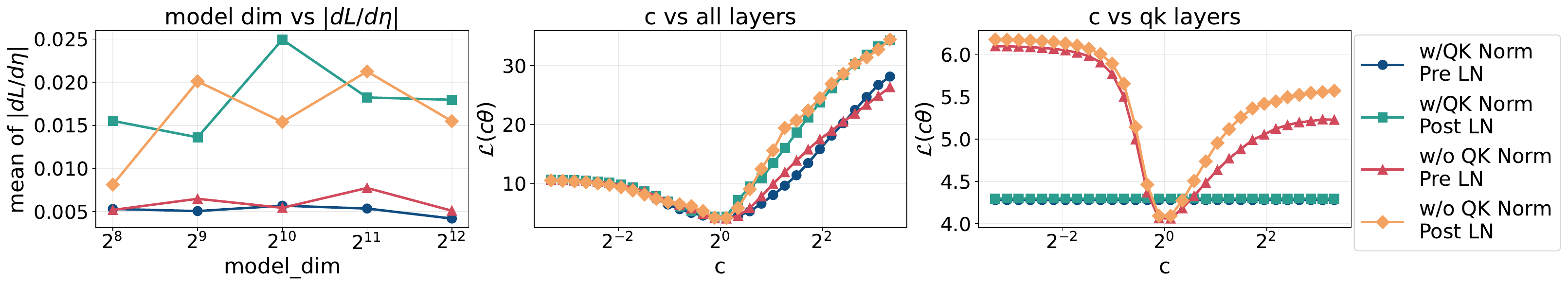}
    \caption{\textbf{Pre-LN reduces local LR sensitivity under $1/d$ attention scaling.} GPT models are trained on FineWeb with AdamW. Top: validation loss versus LR for model widths $256$--$4{,}096$, comparing Pre-LN and Post-LN with and without QK normalization. Bottom left: local LR sensitivity, measured by the mean of the five smallest nonzero LR-gradient magnitudes estimated by finite differences. Bottom middle and right: loss after multiplying all parameters or only QK parameters by $c$, at model width $256$ and LR $0.5$.}
    \label{fig:app-mup-attn}
\end{figure}

The parameter-scaling experiments in the bottom middle and right panels distinguish scale invariance of the QK layers from scale invariance of the whole model. With QK normalization, the loss remains nearly constant when only the QK parameters are rescaled, but changes when all parameters are rescaled. Thus, QK normalization alone does not make the whole model scale-invariant, consistent with the layer-wise discussion in Section~\ref{sec5_norm}. In particular, the QK scaling curves are nearly flat for both Pre-LN and Post-LN, despite their different LR sensitivities.

The two-layer Transformer also has a broader range of LRs with low training loss under Pre-LN than under Post-LN (Figure~\ref{fig:app-2-layer-transformer}).

\begin{figure}[!htbp]
    \centering
    \includegraphics[width=1\linewidth]{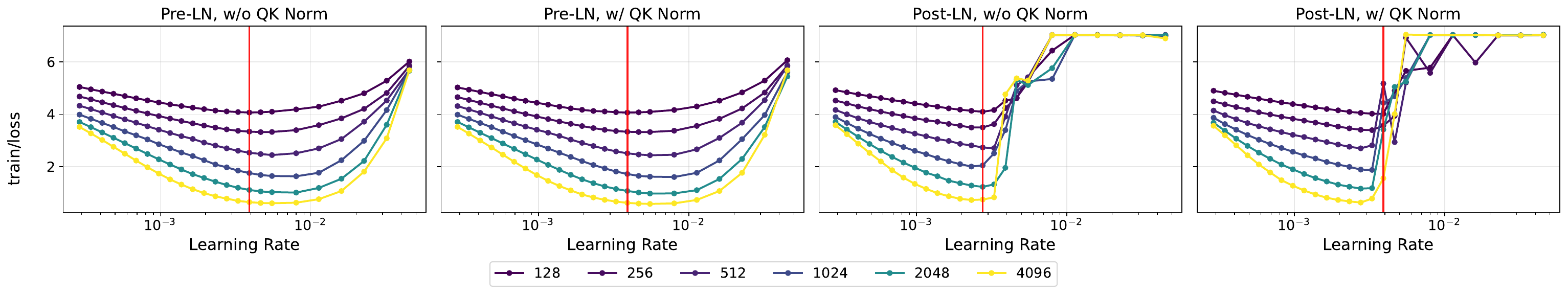}
    \caption{\textbf{Pre-LN gives a broader low-loss LR range in a two-layer Transformer.} We use the implementation of \citet{yang2021tuning} with $1/d$ attention scaling. Panels compare Pre-LN and Post-LN, with and without QK normalization. Curves show training loss for model widths $128$--$4{,}096$.}
    \label{fig:app-2-layer-transformer}
\end{figure}

\section{Experimental Settings}
\label{app-sec:exp-settings}

\subsection{CNN for EoS-LR Experiments}
\label{app:eos-experiment-settings}

The CNN used in Figures~\ref{fig:fig1} and \ref{fig:bs-lrgrad-eos} has three $3\times3$ convolutional layers with $64$, $64$, and $128$ channels, stride $1$, and no padding. Each convolution is followed by ReLU, with $2\times2$ max pooling after the second and third convolutions. The resulting features are flattened and passed through a fully connected layer of width $M$, a ReLU, and a linear output layer with $10$ outputs. The convolutional widths remain fixed when $M$ changes. We do not use batch normalization.

We use a fixed subset of $8{,}192$ CIFAR-10 training images. We do not use data augmentation. We use squared loss. We train with SGD without momentum or weight decay and keep the LR constant for up to $200$ epochs.

For Figure~\ref{fig:fig1}, we use $M\in\{128,256,512\}$, batch size $1024$, and three random seeds. We sweep $21$ LRs from $0.025$ to $0.8$; panel (c) uses $M=256$. For Figure~\ref{fig:bs-lrgrad-eos}, we fix $M=256$ and use batch sizes $16$, $64$, $256$, $1024$, and $4096$, with five random seeds. This sweep uses $18$ LRs from $0.0125$ to approximately $0.635$.

We evaluate training loss and the EoS-LR estimates at the final parameters. 
The finite differences for $F_{\mathrm{naive}}$ and $F_{\mathrm{batch}}$ use step size $h=10^{-3}$, following Appendix~\ref{app:eos-lr-implementation}. For $F_{\mathrm{batch}}$, the sampled batch size matches the training batch size. 

\subsection{CNN and ResNet8 for FP-LR Experiments}
This subsection describes the FP-LR experiments in Figure~\ref{fig:fp-lr-grad-bound} and the additional experiments with CNNs (Figures~\ref{fig:app-cnn-bs}, \ref{fig:app-gd-fashion}, and \ref{fig:app-adamw-cnn}) and ResNet8 (Figures~\ref{fig:app-resnet8-lr-macro}, \ref{fig:app-resnet8-lr-sign}, and \ref{fig:app-adamw-res}). Both models use cross-entropy loss and CIFAR-10, except for the Fashion-MNIST experiment in Figure~\ref{fig:app-gd-fashion}.

The CNN is a two-layer convolutional network in which each convolutional layer has $32$ channels and is followed by an ELU activation and a $2\times 2$ average pooling layer, with a final fully connected layer at the end. We compare two architectural variants: cnn-avgpool-bn-elu, which includes Batch Normalization, and cnn-avgpool-elu, which does not. We use SGD as the optimizer, with a weight decay of $1\mathrm{e}{-5}$ and a mini-batch size of $1024$ for both variants. We train for $200$ epochs, using the standard CIFAR data augmentations of random cropping and horizontal flipping. No learning rate scheduler is used for the CNN experiments.
Training was conducted on a single NVIDIA H100 GPU. One run took approximately 20 minutes to complete 200 epochs, depending on the frequency of metrics collection.

For ResNet8~\citep{he2016deep}, we trained models for 100 epochs with stronger data augmentation, combining random cropping and horizontal flipping with AutoAugment and Cutout. 
The learning rate is decayed according to a cosine annealing schedule.
When decaying the learning rate with a schedule, we calculate the LR gradient using the average learning rate across the entire training run.
The model implementation largely follows the implementations used in \cite{cohen2021gradient}.

\subsection{ResNet Experiments with AirBench}
\label{app:base-width-multipliers}

We use the CIFAR-10 AirBench implementation\footnote{\url{https://github.com/KellerJordan/cifar10-airbench}} with the following modifications and settings. The ResNet has three residual blocks of widths $M$, $3M$, and $4M$, each containing three $3\times3$ convolutions, batch normalization, GELU activations, and max pooling. A fixed $2\times2$ whitening convolution precedes the blocks, and a linear classifier follows them. Whitening filters are estimated from $5{,}000$ training images; their biases are trained for the first three epochs. Batch-normalization scales are fixed at $1$, while biases are trainable.

Training uses $45$ epochs of Nesterov SGD with momentum $0.85$ and cross-entropy loss with label smoothing $0.2$. The LR increases linearly from $20\%$ to its peak over the first $10\%$ of training, then decreases linearly to zero. For Figures~\ref{fig:resnet-eos-lrgrad} and~\ref{fig:alignment-resnet}, we keep the LR constant throughout training. Images are normalized using CIFAR-10 statistics and augmented with horizontal flips, translations of up to four pixels, and $12\times12$ cutout. From each batch of $1{,}024$ images, a proxy network selects the $512$ highest-loss examples for training the main model. The proxy has widths $(32,64,64)$, two convolutions per block, and is updated every fourth batch over the same training iterations.

Batch-normalization biases use $64$ times the optimizer LR of the other parameters. The weight decay is $0.012$, normalized so that changing the LR or the bias LR multiplier does not rescale the weight-decay contribution to the update. Every five optimizer steps, we update an exponential moving average of the weights and resume training from these averaged weights. The averaging decay is $0.95^5(t/T)^3$, where $t$ is the current step and $T$ is the total number of training steps. Figure~\ref{fig:base_width_resnet_cifar} reports accuracy on the CIFAR-10 test split, labeled test accuracy in the implementation, using averaged logits from horizontal flips and one-pixel diagonal translations.

Base Width $M_0$ is the reference width for the $\mu$P scaling. The input and output multipliers, $s_{\mathrm{in}}$ and $s_{\mathrm{out}}$, multiply the first block's initial convolution output and the final logits, respectively. Including the width dependence, the factors are $s_{\mathrm{in}}\sqrt{M/M_0}$ and $s_{\mathrm{out}}\sqrt{M_0/M}$. The initial weights of that convolution are also scaled by $\sqrt{M_0/M}$. These multipliers scale layer outputs, not optimizer LRs directly. Figure~\ref{fig:base_width_resnet_cifar} compares $M_0=1$ and $32$, varying one multiplier at a time while keeping the other at $1$ and holding both fixed across network widths. Specifically, the values $3$ and $1/9$ in panels (b,c) are the width-independent coefficient $s_{\mathrm{out}}$, and the values $3$ and $1/3$ in panels (d,e) are the width-independent coefficient $s_{\mathrm{in}}$. The width-dependent factors $\sqrt{M/M_0}$ and $\sqrt{M_0/M}$ remain as specified above.

\subsection{GPT Training}

For the GPT-based experiments in Figures~\ref{fig:block-lr-resnet-cifar}(c), \ref{fig:lrgrad-exponents}(c), and \ref{fig:gpt-qknorm-lrgrad-profile}(c,d), and the additional GPT experiments, we build on the implementation of \cite{modded_nanogpt_2024} and train on FineWeb using next-token cross-entropy loss. The reference model has $12$ Transformer blocks and $8$ attention heads, with model width $M$ varied across experiments. The MLP expands the hidden dimension to $4M$ and uses squared ReLU activations. Attention uses rotary positional embeddings and a causal sliding-window mask that prevents attention across document boundaries. We retain the value embeddings and learned skip connections of the base implementation; its eighth block omits attention. The normalization layers use RMS normalization. We compare normalization before and after the residual updates (Pre-LN and Post-LN), with and without QK normalization (Figures~\ref{fig:app-constant-lr-loss} and \ref{fig:constant-lr-gpt}).

We use AdamW with $(\beta_1,\beta_2)=(0.8,0.95)$. The LR shown in the figures is a global multiplier of the layer-wise LRs. Under the $\mu$P parametrization, the LRs of the LM head and hidden matrices are multiplied by $M_0/M$, with reference width $M_0=256$. Embedding and scalar LRs remain independent of width. Figures~\ref{fig:app-constant-lr-loss} and \ref{fig:constant-lr-gpt} compare common and layer-wise LR settings, while Figure~\ref{fig:block-lr-resnet-cifar}(c) varies the LR of a selected parameter group. The default attention scaling is $1/\sqrt{d_{\mathrm{head}}}$; the $1/d_{\mathrm{head}}$ variant is used in Figure~\ref{fig:app-mup-attn}.

\enlargethispage{\baselineskip}
Our reference training setup uses sequences of $8192$ tokens. Training lasts $10{,}000$ iterations in total, including $1{,}000$ iterations of linear warmup. The LR then remains constant before a final linear decay. The reference LR-transfer sweeps use $5{,}000$ decay iterations; the LR-gradient experiments also include runs with this phase reduced to one iteration. Validation loss is evaluated on a separate FineWeb validation split. The layer-wise FP-LR profiles in Figure~\ref{fig:gpt-qknorm-lrgrad-profile}(c,d) use the computation described in Appendix~\ref{app:layer-wise-kfac}.
We trained models with one H100 GPU and it took approximately 2-3 hours per training run.

\end{document}